\documentclass{article}
\usepackage[normalem]{ulem}

\usepackage{PRIMEarxiv}

\usepackage[utf8]{inputenc}
\usepackage[T1]{fontenc}

\usepackage{hyperref}

\usepackage{url}
\usepackage{booktabs}
\usepackage{nicefrac}
\usepackage{microtype}
\usepackage{lipsum}
\usepackage{fancyhdr}
\usepackage{graphicx}
\usepackage{subcaption}
\usepackage{float}
\usepackage{multirow}
\usepackage{authblk}
\usepackage{xcolor}
\usepackage{enumitem}

\usepackage{amsmath}
\usepackage{amsfonts}
\usepackage{amssymb}
\usepackage{mathtools}
\usepackage{amsthm}

\usepackage{algorithm}
\usepackage{algpseudocode}

\usepackage{tikz}
\usetikzlibrary{positioning, shapes.geometric, arrows.meta}

\theoremstyle{plain}
\newtheorem{theorem}{Theorem}
\newtheorem{lemma}[theorem]{Lemma}
\newtheorem{definition}[theorem]{Definition}
\newtheorem{assumption}[theorem]{Assumption}
\newtheorem{proposition}{Proposition}

\usepackage[numbers,sort&compress]{natbib} %

\usepackage{xcolor}
\usepackage[normalem]{ulem}

\newif\ifshowedits
\showeditstrue 

\ifshowedits
  \newcommand{\oldtext}[1]{{\color{gray}\sout{#1}}}
  
\else
  \newcommand{\oldtext}[1]{}

\fi

\graphicspath{{media/}}

\newcounter{pr}

\title{Representation Learning via Composite Subspace Flow Matching}

\author[1]{\textbf{José Afonso}\thanks{Corresponding author: josefafonso@tecnico.ulisboa.pt}\hspace{0.9ex}}
\author[2]{\textbf{Mário Figueiredo}}
\affil[1]{Instituto Superior Técnico, Lisboa, Portugal}
\affil[2]{Instituto de Telecomunicações, Instituto Superior Técnico, Lisboa, Portugal}

\begin{document}
\maketitle

\begin{abstract}
\noindent Distilling semantic meaning from raw image data remains a challenge in machine learning. 
Current self-supervised paradigms face a trade-off: joint-embedding architectures produce relevant features by aggressively discarding intra-class variance, irreversibly spatial and reconstruction. 
Conversely, generative models preserve spatial details but struggle to isolate semantics, as their optimization is dominated by pixel variations.
To bridge this divide, we propose composite subspace flow matching (CSFM). 
Rather than corrupting data with unstructured isotropic noise, CSFM formulates representation learning as the variance-minimizing inversion of a deterministic degradation curriculum across spectral, color, and illumination subspaces. 
By bounding trajectory variance with hard temporal style truncation, a capacity-constrained encoder is driven to ignore transient style artifacts and encode persistent content. 
Empirical evaluations across CIFAR-10, CIFAR-100, and STL-10 show that CSFM yields competitive discriminative transferability while preserving generative synthesis across {both} convolutional and vision transformer architectures.
By avoiding the collapse pathologies of non-generative methods, CSFM forms a smooth latent space that outperforms standard optimal transport flows on linear probing and few-shot classification, while enabling coherent synthesis and trajectory interpolation.
These results confirm that steering the physical dynamics of generative flows offers a scalable paradigm for learning representations that are simultaneously linearly separable and generatively invertible.
\end{abstract}

\keywords{Representation Learning \and Flow Matching \and Generative Modeling \and Self-Supervised Learning}


\section{Introduction}
\label{sec:intro}

The primary objective of self-supervised representation learning is to extract informative, high-level, and useful features for downstream tasks from unlabeled raw data \cite{bengio2014representationlearningreviewnew, LeCun2015, Le_Khac_2020}. 
While self-supervised learning (SSL) has catalyzed advances across computer vision, natural language processing, and world models for autonomous decision-making \cite{he2015deepresiduallearningimage, devlin2019bertpretrainingdeepbidirectional, hafner2024masteringdiversedomainsworld, lecun2022path}, extracting representations from unlabelled visual data remains challenging due to the high noise-to-signal ratio inherent to pixel space \cite{vanassel2025jointembeddingvsreconstruction}. 
Unlike discrete textual tokens, which represent semantically compressed units, raw pixels are uncompressed physical measurements \cite{vanassel2025jointembeddingvsreconstruction}. 
Consequently, high-magnitude nuisance variables, such as localized textures, illumination shifts, and background clutter, statistically dominate the input variance, frequently overshadowing the underlying semantic targets \cite{balestriero2023cookbook, balestriero2024learningreconstructionproducesuninformative}.

To address these nuisance factors, visual SSL research has largely split into two paradigms with conflicting trade-offs \cite{vanassel2025jointembeddingvsreconstruction}. On the one hand, reconstruction-based architectures, such as masked autoencoders \cite{he2021maskedautoencodersscalablevision} and denoising autoencoders \cite{pascal2008dae, bengio2013generalizeddenoisingautoencodersgenerative}, model the data distribution by inverting input-space corruptions. However, because standard $L_2$ pixel recovery aligns with raw variance rather than semantic abstraction, these models remain biased toward unpredictable, high-variance details \cite{zhang2018unreasonableeffectivenessdeepfeatures, lecun2022path, balestriero2024learningreconstructionproducesuninformative}. Conversely, joint-embedding architectures (JEAs), such as VICReg \cite{bardes2022vicregvarianceinvariancecovarianceregularizationselfsupervised}, and LeJEPA \cite{balestriero2025lejepaprovablescalableselfsupervised}, circumvent pixel synthesis entirely by enforcing invariance across augmented views in a latent representation space \cite{grill2020bootstraplatentnewapproach, caron2021emergingpropertiesselfsupervisedvision}. Although this abstraction shields JEAs from nuisance variations, it leaves them vulnerable to informational and dimensional collapse \cite{zbontar2021barlowtwinsselfsupervisedlearning, jing2022understandingdimensionalcollapsecontrastive}.

More fundamentally, recent theoretical work reveals a structural friction between discriminative extraction and generative reversibility \cite{bizeul2024probabilisticmodelselfsupervisedlearning, vanassel2025jointembeddingvsreconstruction, balestriero2024learningreconstructionproducesuninformative}. 
To maximize linear separability on semantic tasks, JEAs suppress all intra-class sensitivity. 
While this invariance maximizes classification margins, it permanently deletes high-frequency and spatial information, rendering the learned representations structurally irreversible and incapable of image synthesis \cite{bizeul2024probabilisticmodelselfsupervisedlearning, vonkügelgen2022selfsupervisedlearningdataaugmentations}.

Continuous-time generative models, specifically continuous normalizing flows and diffusion models \cite{ho2020denoisingdiffusionprobabilisticmodels, lipman2023flowmatchinggenerativemodeling, song2020score}, provide fully invertible spatial transformations that preserve geometry \cite{chen2018neural, lipman2023flowmatchinggenerativemodeling}. 
Recognizing their capacity to model complex distributions without spatial loss, a recent paradigm shift repurposes continuous generative trajectories as proxy objectives for representation learning \cite{preechakul2022diffusionautoencodersmeaningfuldecodable, mittal23DRL, ma2025learningstraightflowsvariational}. 
By conditioning a continuous vector field on a bottlenecked latent code, the encoder is driven to distill the sufficient statistics required to resolve trajectory ambiguities \cite{ma2025learningstraightflowsvariational, hudson2023soda}. 
Because this objective relies on continuous trajectory reconstruction rather than explicit embedding-space invariance, it avoids the dimensional collapse common in JEAs while maintaining spatial reversibility \cite{li2024return}.
 
Despite this promise, continuous representation models face a significant limitation: standard frameworks rely almost exclusively on isotropic Gaussian noise \cite{song2020score, ho2020denoisingdiffusionprobabilisticmodels, ma2024sitexploringflowdiffusionbased}. This spatially uncorrelated corruption perturbs all image coordinates independently and identically, without accounting for the underlying structure of visual data.
Recent generative works have expanded beyond pure Gaussian noise by designing physical degradation trajectories, such as blurring or masking \cite{bansal2022colddiffusioninvertingarbitrary, daras2022softdiffusionscorematching, hoogeboom2024blurringdiffusionmodels}. 
However, adapting these cumulative multi-operator processes to representation learning proves challenging: sequential noise injections incur an intractable covariance inversion cost and introduce an irreducible target noise floor that disrupts encoder optimization.

To resolve these limitations while balancing representation quality with generative reversibility, we propose \textit{composite subspace flow matching} (CSFM). 
It replaces uniform corruption with an ordered sequence of deterministic subspace projections spanning the frequency, color, and illumination domains. 
By pairing this composite drift with a single, decoupled isotropic noise schedule, CSFM yields a simulation-free target vector field that nullifies irreducible target noise with minimal computational overhead.
Under this continuous curriculum, content is defined by temporal persistence under physical transport. 
CSFM isolates content within a capacity-bounded latent code while offloading transient spatial details to the generative vector field, balancing downstream linear separability with a smooth, reversible generative manifold.

The core contributions of this work are three-fold:
\begin{enumerate}[leftmargin=*]
    \item {Composite subspace flow matching (CSFM):} we introduce a simulation-free flow matching framework that couples an ordered visual degradation path with a decoupled isotropic noise schedule, eliminating target velocity variance with minimal memory overhead
    \item {Optimal encoder geometry:} we show that minimizing trajectory variance induces a block-diagonal precision geometry, {prevents features from the style subspace from dominating encoder capacity}, and proves that style truncation is necessary for content alignment.
    \item {Comprehensive empirical validation:} across CIFAR-10, CIFAR-100, and STL-10 ($96 \times 96$), using CNN and vision transformer backbones, CSFM outperforms unguided flows and autoencoders in linear and few-shot probing while preserving an invertible, connected latent space for generative decoding.
\end{enumerate}

The remainder of this paper is organized as follows. 
Section~\ref{sec:background} reviews self-supervised representation learning paradigms, continuous normalizing flows, trajectory conditioning, and sketched Gaussian regularization.
Section~\ref{sec:methodology} introduces CSFM, including the composite projection curriculum, asymmetric gradient routing, and forward-mode analytical computation. 
Section~\ref{sec:theory} provides theoretical proofs for subspace disentanglement, capacity saturation, and the style truncation requirement. 
Section~\ref{sec:experiments} presents empirical evaluations on representation quality, generative fidelity, and scalability across architectures and resolutions. 
Section~\ref{sec:conclusion} presents our conclusions and outlines directions for future work.

\section{Background and Related Work}
\label{sec:background}

\subsection{Paradigms of Representation Learning: JEAs vs. Generative Models}
\label{sec:bg_paradigms}

The goal of self-supervised representation learning is to extract from raw sensory data useful features for downstream tasks, while discarding task-irrelevant style and nuisance \cite{achille2018information, bardes2022vicregvarianceinvariancecovarianceregularizationselfsupervised, balestriero2025lejepaprovablescalableselfsupervised}. 
Following recent theoretical formulations \cite{vonkügelgen2022selfsupervisedlearningdataaugmentations, vanassel2025jointembeddingvsreconstruction}, an observation $x \in \mathbb{R}^D$ is decomposed into complementary orthogonal subspaces $\mathbb{R}^D = \mathcal{X}_c \oplus \mathcal{X}_s$ {via}
\begin{equation}
    x = x_c + x_s,
    \label{eq:additive_subspace}
\end{equation}
where content $x_c \in \mathcal{X}_c$ lies in a low-dimensional subspace ($\dim(\mathcal{X}_c) = c \ll D$) and style $x_s \in \mathcal{X}_s$ occupies the orthogonal complement ($\mathcal{X}_s = \mathcal{X}_c^\perp$) \cite{vanassel2025jointembeddingvsreconstruction}.

Existing self-supervised methodologies target data decomposition through two diverging paradigms with contrasting trade-offs \cite{vanassel2025jointembeddingvsreconstruction}. 
Reconstruction-based generative architectures, such as denoising autoencoders (DAE) \cite{pascal2008dae, bengio2013generalizeddenoisingautoencodersgenerative} and masked autoencoders (MAE) \cite{he2021maskedautoencodersscalablevision}, learn by inverting input corruptions in pixel space \cite{vanassel2025jointembeddingvsreconstruction}. 
While this ensures generative reversibility and preserves fine spatial, standard $L_2$ pixel recovery optimizes for total raw variance rather than semantic features \cite{zhang2018unreasonableeffectivenessdeepfeatures, lecun2022path}. 
Consequently, the encoder exhausts substantial capacity fitting textures and high-variance background noise \cite{vanassel2025jointembeddingvsreconstruction}. 
Conversely, JEAs bypass pixel synthesis entirely by training an encoder to match latent representations across independently augmented views \cite{chen2020simpleframeworkcontrastivelearning, grill2020bootstraplatentnewapproach, caron2021emergingpropertiesselfsupervisedvision}. 
Predicting strictly in latent space protects JEAs from raw pixel \cite{vanassel2025jointembeddingvsreconstruction}. 
However, by aggressively discarding all variance orthogonal to the hand-crafted invariances, JEAs permanently {discard spatial structure}, yielding {structurally irreversible representations that are insufficient for image reconstruction} \cite{bizeul2024probabilisticmodelselfsupervisedlearning, vonkügelgen2022selfsupervisedlearningdataaugmentations}.

\paragraph{Representational Collapse and Latent Regularization.}
A critical failure mode of optimizing latent invariance in JEAs without generative constraints is representational collapse, wherein representations degrade to a trivial constant vector (information collapse) or span a degenerate lower-dimensional subspace (dimensional collapse) \cite{zbontar2021barlowtwinsselfsupervisedlearning, jing2022understandingdimensionalcollapsecontrastive}. 
While contrastive objectives \cite{chen2020simpleframeworkcontrastivelearning} and covariance decorrelation penalties (e.g. VICReg \cite{bardes2022vicregvarianceinvariancecovarianceregularizationselfsupervised}, Barlow Twins \cite{zbontar2021barlowtwinsselfsupervisedlearning}) prevent collapse, second-moment decorrelation {incurs quadratic computational complexity and is insufficient to guarantee true Gaussianity, which is provably optimal for minimizing downstream prediction risk \cite{balestriero2025lejepaprovablescalableselfsupervised}.}
Leveraging the Cram\'er-Wold theorem, SIGReg matches random $1$D projections of the empirical characteristic function against an analytical standard normal with linear complexity, providing a robust, collapse-free bottleneck without variational sampling noise \cite{balestriero2025lejepaprovablescalableselfsupervised}.

\subsection{Continuous Normalizing Flows and Conditional Flow Matching}
\label{sec:bg_cfm}

Continuous normalizing flows (CNFs) generate data along a continuous probability path $p_t(x_t)$ ($t \in [0, 1]$) that transports a standard Gaussian prior $p_{\text{prior}} \equiv \mathcal{N}(0, I_D)$ at $t = 1$ to the clean data distribution $p_{\text{data}}$ at $t = 0$ \footnote{Canonical flow matching \cite{lipman2023flowmatchinggenerativemodeling, albergo2023buildingnormalizingflowsstochastic} sets prior noise at $t=0$ and clean data at $t=1$. We invert this convention ($t=0$ for clean data, $t=1$ for noise) to frame representation learning as extracting features from uncorrupted data $x_0$ while degrading images forward in time ($t > 0$).}  \cite{chen2018neural, lipman2023flowmatchinggenerativemodeling}.
Conservation of probability mass dictates that this density path is governed by a time-dependent vector field $v_t: \mathbb{R}^D \to \mathbb{R}^D$ satisfying the continuity equation \cite{lipman2023flowmatchinggenerativemodeling, albergo2025stochasticinterpolantsunifyingframework, tong2024improvinggeneralizingflowbasedgenerative}:
\begin{equation}
    \partial_t p_t(x_t) + \nabla \cdot \big( p_t(x_t) v_t(x_t) \big) = 0.
    \label{eq:continuity}
\end{equation}
Sampling from $p_{\text{data}}$ is achieved by drawing $x_1 \sim \mathcal{N}(0, I_D)$ and integrating the characteristic ordinary differential equation (ODE) $\frac{d}{dt} x_t = v_t(x_t)$ backward from $t = 1$ to $t = 0$ \cite{lipman2023flowmatchinggenerativemodeling, albergo2025stochasticinterpolantsunifyingframework, tong2024improvinggeneralizingflowbasedgenerative}.

Because computing the marginal velocity field $v_t(x_t) = \mathbb{E}_{x_0, x_1}[u_t(x_t \mid x_0, x_1) \mid x_t]$ requires intractable dataset-wide integration, conditional flow matching (CFM) introduces the independent coupling $\pi(x_0, x_1) = p_{\text{data}}(x_0) p_{\text{prior}}(x_1)$ and regresses a neural vector field $v_\theta(x_t, t)$ against an analytical conditional target $u_t(x_t \mid x_0, x_1)$ \cite{lipman2023flowmatchinggenerativemodeling, albergo2023buildingnormalizingflowsstochastic}
\begin{equation}
    \mathcal{L}_{\text{CFM}}(\theta) = \mathbb{E}_{t, x_0, x_1} \left[ \| u_t(x_t \mid x_0, x_1) - v_\theta(x_t, t) \|_2^2 \right],
    \label{eq:cfm_loss}
\end{equation}
where $t \sim \mathcal{U}(0, 1)$ and $x_t \sim p_t(x_t \mid x_0, x_1)$. 
Provided $p_t(x_t) > 0$, the gradients of $\mathcal{L}_{\text{CFM}}(\theta)$ are identical to those of the intractable marginal regression objective \cite{lipman2023flowmatchinggenerativemodeling, albergo2025stochasticinterpolantsunifyingframework, tong2024improvinggeneralizingflowbasedgenerative}. 
Under optimal transport CFM (OT-CFM), trajectories follow straight lines: $x_t = (1 - t)x_0 + t x_1$, yielding the constant target velocity $u_t(x_t \mid x_0, x_1) = x_1 - x_0$ \cite{lipman2023flowmatchinggenerativemodeling, tong2024improvinggeneralizingflowbasedgenerative}.

\paragraph{The Irreducible Target Variance Floor.} 
Under an $L_2$ regression penalty, the optimal vector field converges to the conditional expectation $v_\theta^*(x_t, t) = \mathbb{E}[u_t \mid x_t]$ \cite{bishop2006pattern}. 
Decomposing the objective around $v_\theta^*$ reveals a lower bound \cite{ma2025learningstraightflowsvariational}:
\begin{equation}
    \mathcal{L}_{\text{CFM}}(\theta^*) \geq \mathbb{E}_{t, x_t} \left[ \operatorname{Tr}\Big(\operatorname{Var}(u_t \mid x_t)\Big) \right].
    \label{eq:irreducible_variance_bg}
\end{equation}
The loss is bounded below by the \textit{irreducible target variance} arising from trajectory intersections where distinct conditional paths cross at identical space-time coordinates $x_t$ \cite{ma2025learningstraightflowsvariational, mittal23DRL}. 
{Eliminating this residual target variance} requires conditioning the flow field on a latent representation $z$ {to disambiguate crossing trajectories}, which forms the theoretical foundation for continuous flow-based representation learning \cite{mittal23DRL, ma2025learningstraightflowsvariational}.

\paragraph{Structured Degradation vs. Decoupled Trajectories.}
Standard continuous generative models rely on isotropic Gaussian noise, which perturbs all pixel coordinates independently and identically, {failing to capture} the structured spatial, spectral, and photometric correlations of natural images \cite{bansal2022colddiffusioninvertingarbitrary}.
While generalized diffusion models incorporate physical corruptions like blurring or heat dissipation \cite{daras2022softdiffusionscorematching, hoogeboom2024blurringdiffusionmodels, rissanen2023generative}, they remain restricted to single diagonalizable operators to preserve analytical tractability. 
{Extending this formulation to multiple operators reveals a computational divide: when the corruption operators commute, they admit a simultaneous eigenbasis, enabling closed-form conditioning and evaluation in $\mathcal{O}(D \log D)$ time. 
Conversely, composing non-commuting operators destroys this joint diagonalization, producing a dense marginal covariance whose inversion incurs an intractable $\mathcal{O}(D^3)$ computational cost. 
Regardless of commutativity, injecting independent noise across paths produces intersecting trajectories, creating an irreducible target variance floor that cannot be eliminated without latent conditioning.} 
In parallel, masked modeling reveals that corrupting global spectral bases rather than localized spatial patches enhances semantic representation learning \cite{bizeul2025pixelscomponentseigenvectormasking}.
In a similar spirit of leveraging structured data transformations, CSFM embeds structured physical degradation into a continuous generative model: pairing a deterministic drift across canonical visual subspaces with a decoupled isotropic noise path steers the encoder toward persistent content at minimal computational cost.

\subsection{Flow-Based Representation Learning and Deterministic Conditioning}
\label{sec:bg_rfm}

To resolve trajectory intersections and repurpose continuous models for representation learning, we leverage the framework of variational flow matching (VFM) \cite{ma2025learningstraightflowsvariational, guo2025variationalrectifiedflowmatching}, which conditions the vector field on {an encoder representation $z = E_\phi(x_0) \in \mathbb{R}^b$ ($b \ll D$): $v_\theta(x_t, t, z)$. 
The output dimension $b$ defines the architectural bottleneck of the encoder $E_\phi$.} While standard VFM formulations employ a stochastic posterior $q_\phi(z \mid x_0) = \mathcal{N}(\mu_\phi, \Sigma_\phi)$ to facilitate random generation \cite{kingma2022autoencodingvariationalbayes}, sampling noise inflates intra-class variance and degrades downstream linear separability \cite{ghosh2020variationaldeterministicautoencoders, chen2020simpleframeworkcontrastivelearning}.
To {eliminate this sampling variance}, we collapse the posterior into a Dirac distribution centered on an encoder $E_\phi$: $q_\phi(z \mid x_0) = \delta(z - E_\phi(x_0))$ \cite{ghosh2020variationaldeterministicautoencoders}. 
The objective simplifies to joint optimization over the deterministic encoder $\phi$ and vector field $\theta$:
\begin{equation}
    \mathcal{L}_{\text{CFM}}(\theta, \phi) = \mathbb{E}_{t, x_0, x_1} \left[ \| u_t(x_t \mid x_0, x_1) - v_\theta(x_t, t, E_\phi(x_0)) \|_2^2 \right].
    \label{eq:joint_cfm}
\end{equation}
Because intermediate states $x_t$ depend solely on boundary pairs $(x_0, x_1)$, marginalizing over $z = E_\phi(x_0)$ preserves the true marginal probability path $p_t(x_t)$ and maintains generative invertibility \cite{ma2025learningstraightflowsvariational, guo2025variationalrectifiedflowmatching}.

\paragraph{Deterministic Conditioning as a Variance Bottleneck.} 
Conditioning on $z = E_\phi(x_0)$ alters the irreducible variance floor in Eq.~\eqref{eq:irreducible_variance_bg}. 
By the law of total variance:
\begin{equation}
    \operatorname{Var}(u_t \mid x_t) = \mathbb{E}_{z \mid x_t} \left[ \operatorname{Var}(u_t \mid x_t, z) \right] + \operatorname{Var}_{z \mid x_t} \left( \mathbb{E}[u_t \mid x_t, z] \right).
    \label{eq:total_variance}
\end{equation}
Because the covariance of conditional expectations is positive semi-definite, conditioning on the representation reduces the residual velocity variance:
\begin{equation}
    \mathbb{E}_{t, x_t} \left[ \operatorname{Tr}\big(\operatorname{Var}(u_t \mid x_t)\big) \right] \ge \mathbb{E}_{t, x_t, z} \left[ \operatorname{Tr}\big(\operatorname{Var}(u_t \mid x_t, E_\phi(x_0))\big) \right].
    \label{eq:variance_reduction_bound}
\end{equation}
Under the capacity bottleneck $b \ll D$, the encoder cannot memorize $x_0$. 
{By minimizing Eq.~\eqref{eq:joint_cfm}, the encoder $E_\phi$ learns} minimal sufficient statistics of $x_0$ that maximize variance reduction along the continuous trajectories \cite{mittal23DRL, ma2025learningstraightflowsvariational}.

\paragraph{Dynamics-Driven Persistence vs. Augmentation Invariance.} 
This deterministic formulation shifts how content and style are separated. 
In joint-embedding architectures, content is defined by invariance to hand-crafted data augmentations (e.g. random crops and color jitter) \cite{tian2020makesgoodviewscontrastive, eastwood2024selfsuperviseddisentanglementleveragingstructure}, discarding perturbed features as style and risking shortcut learning if heuristics misalign with underlying semantics \cite{vanassel2025jointembeddingvsreconstruction, balestriero2024learningreconstructionproducesuninformative}. 
In continuous flows, content is defined by \textit{temporal persistence} under physical transport. 
Because transient fine-scale variations vanish early along the degradation trajectory, they induce large conditional velocity variance if predicted from degraded states. 
Consequently, the capacity-bounded encoder is driven to distill features that resolve global trajectory ambiguity, isolating content from continuous transport dynamics rather than synthetic view corruptions.

\section{Methodology: Composite Subspace Flow Matching}
\label{sec:methodology}

\subsection{Composite Orthogonal Subspaces}
\label{sec:composite_subspaces}

{To formalize the separation of content from style}, we partition the $D$-dimensional observation space $\mathbb{R}^D$ into complementary content ($\mathcal{X}_c$) and style ($\mathcal{X}_s$) subspaces. 
This decomposition is governed by a pair of self-adjoint orthogonal projection operators $P_c, P_s \in \mathbb{R}^{D \times D}$ satisfying:
{
\begin{equation}
    P_c + P_s = I_D, \quad P_c P_s = \mathbf{0}, \quad P_c^2 = P_c = P_c^T, \quad P_s^2 = P_s = P_s^T.
    \label{eq:projector_axioms}
\end{equation}
}
Eq.~\eqref{eq:projector_axioms} guarantees a lossless orthogonal decomposition of any sample: $x_0 = x_{0,c} + x_{0,s}$, where $x_{0,c} = P_c x_0 \in \mathcal{X}_c$ isolates the content component and $x_{0,s} = P_s x_0 \in \mathcal{X}_s$ denotes the style component.
We instantiate these operators deterministically across three physical domains that mirror visual perception:

\paragraph{Frequency Domain (DCT-II Basis).} 
Macroscopic visual topology resides predominantly in low spatial frequencies, whereas fine-grained textures occupy high frequencies \cite{geirhos2022imagenettrainedcnnsbiasedtexture, bansal2022colddiffusioninvertingarbitrary}. 
We project images into the real-valued frequency domain using the 2D orthonormal discrete cosine transform (DCT-II), which achieves strong energy compaction without complex arithmetic or periodic boundary artifacts \cite{ahmed1974discrete}.
For an image of dimensions $H \times W$, the separable transform evaluates as $Y = C_H X C_W^T$, where $C_N \in \mathbb{R}^{N \times N}$ is the orthogonal DCT-II matrix with entries $C_{k, n} = \sqrt{\frac{2}{N}} \alpha_k \cos\big( \frac{\pi (2n+1)k}{2N} \big)$ ($\alpha_0 = 1/\sqrt{2}$ and $\alpha_k = 1$ for $k > 0$). 
Let $d(u, v) = \frac{1}{\sqrt{2}} \sqrt{(u/W)^2 + (v/H)^2} \in [0, 1]$ denote normalized radial frequency coordinates. 
While an ideal sharp low-pass filter satisfies exact projection idempotence ($P_c^2 = P_c$), sharp frequency truncation acts as a spatial $\operatorname{sinc}$ convolution, inducing Gibbs ringing artifacts that bias neural regressors toward artificial ripple textures \cite{oppenheim2010discrete}. 
To ensure spatial stability, we implement the spectral content projector as a smooth radial Gaussian filter:
\begin{equation}
    P_c^{\text{freq}}(u, v) = \exp\left( - \frac{d(u, v)^2}{2 \sigma_{\text{low}}^2} \right), \quad P_s^{\text{freq}} = I_D - P_c^{\text{freq}},
    \label{eq:proj_freq}
\end{equation}
where $\sigma_{\text{low}} = r_{\text{low}} / W$ sets the radial cutoff. 
Because $P_c^{\text{freq}} \in (0, 1]$, the operator relaxes strict idempotence with a bounded cross-term $P_c^{\text{freq}} P_s^{\text{freq}} \le 0.25$ that is confined to a narrow transition band around $\sigma_{\text{low}}$ and vanishes asymptotically elsewhere, preserving non-negative impulse responses without structural ringing.

\paragraph{Color Domain (Luminance--Chrominance Basis).}
To prevent the encoder from exploiting superficial color distributions as predictive shortcuts \cite{chen2020simpleframeworkcontrastivelearning}, we decouple structural luminance from chromatic variations. 
Let $U_{\rm color} \in \mathbb{R}^{3 \times 3}$ denote the orthonormal Ohta color transformation matrix ($U_{\rm color}^T U_{\rm color} = I_3$) \cite{ohta1980color}:
\begin{equation}
    U_{\rm color} = \begin{pmatrix} 
    1/\sqrt{3} & 1/\sqrt{3} & 1/\sqrt{3} \\ 
    1/\sqrt{2} & -1/\sqrt{2} & 0 \\ 
    1/\sqrt{6} & 1/\sqrt{6} & -2/\sqrt{6} 
    \end{pmatrix},
    \label{eq:color_matrix}
\end{equation}
where the first row isolates achromatic luminance ($Y$), and the remaining orthogonal rows span the chromatic subspace. 
For an image with $N = HW$ pixels ($D = 3N$), the orthogonal content and style projectors are defined via Kronecker products:
\begin{align}
    P_c^{\text{color}} &= \big(U_{\rm color}^T \operatorname{diag}(1, 0, 0) U_{\rm color}\big) \otimes I_N, \\
    P_s^{\text{color}} &= \big(U_{\rm color}^T \operatorname{diag}(0, 1, 1) U_{\rm color}\big) \otimes I_N.
    \label{eq:proj_color}
\end{align}

\paragraph{Illumination Domain (DC Spatial Basis).} 
To decouple intrinsic scene contrast from global ambient lighting shifts \cite{ulyanov2017instancenormalizationmissingingredient}, we isolate the uniform spatial DC component. 
Let $v_{\text{DC}} = \frac{1}{\sqrt{D}}\mathbf{1} \in \mathbb{R}^D$ denote the normalized all-ones vector. The projectors evaluate to:
\begin{equation}
    P_c^{\text{illum}} = I_D - v_{\text{DC}} v_{\text{DC}}^T, \quad P_s^{\text{illum}} = v_{\text{DC}} v_{\text{DC}}^T.
    \label{eq:proj_illum}
\end{equation}

{Crucially, these operators mutually commute and are simultaneously diagonalized by the separable orthonormal basis $Q = U_{\rm color} \otimes \mathcal{C} \in \mathbb{R}^{D \times D}$, with 2D DCT-II basis $\mathcal{C} = C_W \otimes C_H$. 
This joint diagonal form bypasses dense matrix operations, allowing any composite projection or inverse covariance filtering to be evaluated in $\mathcal{O}(D \log D)$ time via the 2D fast cosine transform (proof in Appendix~\ref{app:simultaneous_diagonalization}).}

\subsection{Decoupled Flow Matching and Variance Nullification}
\label{sec:decoupled_path}

Conditioning the vector field regressor on a deterministic latent representation $z = E_\phi(x_0) \in \mathbb{R}^b$ yields the joint objective in Eq.~\eqref{eq:joint_cfm}. 
Because the optimal vector field is the conditional expectation $v^*(x_t, t, z) = \mathbb{E}[u_t \mid x_t, z]$, the law of total variance separates the expected regression loss into three terms (derived in Appendix~\ref{app:tripartite_derivation}):
\begin{equation}
    \mathcal{L}_{\text{CFM}}(\theta, \phi) = \underbrace{\mathbb{E} \| v_\theta - v^* \|_2^2}_{\mathcal{E}_{\text{model}}(\theta)} + 
    \underbrace{\mathbb{E}_{t, x_t, z} \left[ \operatorname{Tr}\Big(\operatorname{Var}_{x_0 \mid x_t, z}\big(\mathbb{E}[u_t \mid x_t, x_0]\big)\Big) \right]}_{\mathcal{E}_{\text{repr}}(\phi)} + 
    \underbrace{\mathbb{E}_{t, x_t, x_0} \left[ \operatorname{Tr}\big(\operatorname{Var}(u_t \mid x_t, x_0)\big) \right]}_{\mathcal{E}_{\text{target}}}.
    \label{eq:tripartite_loss}
\end{equation}
The term $\mathcal{E}_{\text{model}}(\theta)$ captures the reducible approximation error of $v_\theta$ relative to the Bayes-optimal field $v^*$. Minimizing $\mathcal{E}_{\text{repr}}(\phi)$ drives representation learning by reducing posterior uncertainty over $x_0$ given $(x_t, z)$, which forces $E_\phi$ to extract the sufficient statistics required to guide the velocity field. Lastly, $\mathcal{E}_{\text{target}}$ represents the irreducible variance of $u_t$ given $(x_t, x_0)$, setting an unavoidable, parameter-independent noise floor.

\paragraph{Failure of Cumulative Stochastic Degradation.}
A naive approach inspired by generalized diffusion \cite{hoogeboom2024blurringdiffusionmodels, daras2022softdiffusionscorematching} constructs multi-stage degradation across $K$ physical domains via sequential operator cascades with independent noise injections: $x_t = M_t x_0 + \mathbf{B}_t \boldsymbol{\epsilon}$, where $M_t \in \mathbb{R}^{D \times D}$ is the deterministic linear operator cascade, $\boldsymbol{\epsilon} = [\epsilon_1^T, \dots, \epsilon_K^T]^T \in \mathbb{R}^{KD}$ concatenates independent Gaussian noise vectors, and $\mathbf{B}_t \in \mathbb{R}^{D \times KD}$ is the composite noise schedule. 
Because $KD > D$, this cumulative formulation forms an underdetermined linear system for any given observation $x_t$. 
As detailed in Appendix~\ref{app:cumulative_limitations}, conditioning the target velocity requires inverting the marginal covariance $\Sigma_t = \mathbf{B}_t \mathbf{B}_t^T \in \mathbb{R}^{D \times D}${, which incurs an intractable $\mathcal{O}(D^3)$ computational cost whenever operators fail to commute, and leaves an irreducible target variance floor $\mathcal{E}_{\text{target}} > 0$ along the null space of $\mathbf{B}_t$ that drowns backpropagated representation gradients.}

\paragraph{Decoupled Trajectory and Exact Variance Nullification.}
We eliminate this noise floor by defining a \textit{decoupled} trajectory that combines a composite deterministic drift operator $M_t$ with a single isotropic noise path $B_t = t I_D$:
\begin{equation}
    x_t = M_t x_0 + t x_1 \implies u_t = \dot{M}_t x_0 + x_1.
    \label{eq:decoupled_path}
\end{equation}
Because $B_t = t I_D$ is invertible for all $t \in (0, 1]$, the noise vector is uniquely recoverable from the endpoints as $x_1 = \frac{1}{t}(x_t - M_t x_0)$. 
Substituting $x_1$ into $u_t$ yields an exact, simulation-free analytical target velocity
\begin{equation}
    u_t(x_t \mid x_0) = \Lambda_t x_0 + \frac{1}{t} x_t, \quad \text{where } \Lambda_t \triangleq \dot{M}_t - \frac{1}{t} M_t, 
    \label{eq:analytical_u}
\end{equation}
{and} $\Lambda_t \in \mathbb{R}^{D \times D}$ denotes the linear drift-velocity matrix governing the deterministic data evolution. 
Conditioned on the pair $(x_t, x_0)$, this target velocity is deterministic: $\operatorname{Var}(u_t \mid x_t, x_0) \equiv \mathbf{0}$, which nullifies the irreducible target noise floor ($\mathcal{E}_{\text{target}} = 0$ in Eq.~\eqref{eq:tripartite_loss}). 
Consequently, the flow matching loss directly penalizes the reducible representation error $\mathcal{E}_{\text{repr}}(\phi)$. 
Furthermore, because $B_t = t I_D$ is proportional to the identity matrix, it commutes with all subspace projectors ($[B_t, P] = \mathbf{0}$), ensuring that content and style trajectories evolve independently without noise cross-talk.

\subsection{Generative Curriculum and Style Truncation}
\label{sec:curriculum}

To ensure that the optimal representation isolates semantics, we parameterize the drift operator as 
\begin{equation}
    M_t = \gamma(t) P_c + \alpha(t) P_s.
\end{equation}
Macroscopic content follows the linear interpolation of optimal transport, while style undergoes a temporally truncated decay:
\begin{align} 
    \gamma(t) &=1 - t,\\
    \alpha(t) &= (1 - t) \, e^{-\kappa t} \cdot \mathbb{I}(t < t_1),
    \label{eq:style_schedule}
\end{align}
where $\kappa > 0$ is the decay rate and $t_1 \in (0, 1)$ sets the \textit{style truncation boundary}. 
Substituting these schedules into the drift operator $\Lambda_t$ yields decoupled scalar penalties across subspaces:
\begin{equation}
    \Lambda_t^T \Lambda_t = \lambda_c(t)^2 P_c + \lambda_s(t)^2 P_s.
    \label{eq:decoupled_penalty_operator}
\end{equation}
%
While the content penalty maintains steady $\mathcal{O}(1/t^2)$ precision across the full trajectory ($\lambda_c(t)^2 = 1/t^2$), the style penalty $\lambda_s(t)^2$, evaluated piecewise almost everywhere (a.e.) on $[0, 1] \setminus \{t_1\}$ with respect to the Lebesgue measure, is exponentially suppressed and vanishes identically for $t > t_1$ (detailed derivations in Appendix~\ref{app:penalty_derivation}).
As proven in Section~\ref{sec:theory}, setting $t_1 < 1$ suppresses the encoding of style features into the representation, bounding style variance reduction to a finite constant while allowing semantic variance reduction to diverge.

\subsection{Asymmetric Gradient Routing and Joint Objective}
\label{sec:routing}

As $t \to 0$, the state approaches clean data ($x_t \approx x_0$). 
Because $x_t$ contains nearly complete information about $x_0$, the encoder's  gradient vanishes, stalling feature learning. 
Conversely, the vector field regressor $v_\theta$ must train across the full interval $t \in [0, 1]$ to accurately calibrate sample generation at the clean data boundary \cite{zeng2025flowmatchinglownoiseregime, ho2020denoisingdiffusionprobabilisticmodels}.

To resolve this conflict, we implement \textit{asymmetric gradient routing} via a temporally gated stop-gradient operator $\operatorname{sg}[\cdot]$:
\begin{equation}
    z(t; \phi) = \mathbb{I}_{[t_0, 1]}(t) \, E_\phi(x_0) + \mathbb{I}_{[0, t_0)}(t) \, \operatorname{sg}\big[E_\phi(x_0)\big],
    \label{eq:gated_conditioning}
\end{equation}
where $t_0 \ll 1$ (e.g. $t_0 = 0.05$). 
Eq.~\eqref{eq:gated_conditioning} truncates backpropagation through $E_\phi$ near $t = 0$, shielding the encoder from vanishing updates while allowing $v_\theta$ to regress trajectories across the entire temporal domain.

To prevent dimensional collapse in the deterministic latent space, we regularize raw embeddings with sketched isotropic Gaussian regularization (SIGReg) \cite{balestriero2025lejepaprovablescalableselfsupervised}. 
The total optimization loss evaluates to:
\begin{equation}
    \mathcal{L}_{\text{CSFM}}(\theta, \phi) = \mathbb{E}_{t, x_0, x_1} \left[ \| u_t(x_t \mid x_0) - v_\theta(x_t, t, z(t; \phi)) \|_2^2 \right] + \lambda_{\text{reg}} \mathcal{L}_{\text{SIGReg}}(E_\phi(x_0)),
    \label{eq:total_csfm_objective}
\end{equation}
where $\lambda_{\text{reg}} \ge 0$ is a scalar hyperparameter balancing trajectory regression against latent geometric regularization.

\subsection{Simulation-Free Forward-Mode Vector Field Computation}
\label{sec:forward_computation}

In practice, the total degradation operator is an ordered cascade across $K$ physical domains:
\begin{equation}
    M_t = (1 - t) \prod_{k=1}^K C_t^{(k)}, \quad \text{where } C_t^{(k)} = P_c^{(k)} + e^{-\kappa_k t} \, \mathbb{I}(t < t_1) \, P_s^{(k)}.
    \label{eq:composite_M_cascade}
\end{equation}

{
Evaluating the time derivative $\dot{M}_t x_0$ via reverse-mode automatic differentiation requires caching computation graphs, incurring memory overhead. 
We instead evaluate $u_t(x_t \mid x_0)$ directly in forward mode.

For general non-commuting operators ($[C_t^{(i)}, C_t^{(j)}] \neq \mathbf{0}$), evaluation order is fixed, and $M_t x_0$ and $\dot{M}_t x_0$ are accumulated jointly via the sequential product rule in a single forward pass. 
However, for our proposed commuting operators, the cascade diagonalizes under the joint basis $Q = U_{\rm color} \otimes \mathcal{C}$ (Proposition~\ref{prop:joint_basis}):
\begin{equation}
    M_t = Q^T \boldsymbol{\Gamma}_t Q, \quad \text{with } \boldsymbol{\Gamma}_t \triangleq (1 - t) \prod_{k=1}^K \Lambda_t^{(k)},
    \label{eq:commuting_diagonal_M}
\end{equation}
where $\Lambda_t^{(k)} = Q C_t^{(k)} Q^T$ are diagonal matrices. 
In this regime, operator order is invariant, and the time derivative reduces to $\dot{M}_t = Q^T \dot{\boldsymbol{\Gamma}}_t Q$. 
Both $M_t x_0$ and $\dot{M}_t x_0$ are computed via coordinate-wise vector scaling in the transform domain.
Algorithms~\ref{alg:csfm_training} and~\ref{alg:csfm_target_flow} detail the complete implementation.}

\section{Theoretical Analysis: Optimal Representation Geometry}
\label{sec:theory}

In this section, we characterize the optimal encoder geometry induced by the CSFM objective. We show that under finite capacity, minimizing trajectory variance forces the encoder into a block-diagonal precision geometry that concentrates representational precision within the content subspace.

\subsection{Gaussian Surrogate Formulation and Exact Posterior}
\label{sec:theory_surrogate}

To characterize the representation geometry of the encoder, we formalize the subspace coordinates basis and the bottleneck capacity bounds.

\begin{definition}[subspace coordinate basis]
Recall the orthogonal decomposition $\mathbb{R}^D = \mathcal{X}_c \oplus \mathcal{X}_s$ from Section~\ref{sec:composite_subspaces} with projectors $P_c, P_s \in \mathbb{R}^{D \times D}$. 
We define an orthonormal basis matrix $V = [V_c, V_s] \in \mathbb{R}^{D \times D}$ such that $P_c = V_c V_c^T$ and $P_s = V_s V_s^T$, with dimensions $\dim(\mathcal{X}_c) = \operatorname{rank}(P_c) = c$ and $\dim(\mathcal{X}_s) = \operatorname{rank}(P_s) = s = D - c$.
\end{definition}

\begin{assumption}[maximum-entropy gaussian surrogate]
\label{ass:gaussian_surrogate}
For a deterministic embedding $z = E_\phi(x_0)$, the local conditional distribution $p(x_0 \mid z)$ is approximated by a regularized Gaussian surrogate $\mathcal{N}(\mu_{x \mid z}, \Sigma_\phi)$, where $\Sigma_\phi = (\Omega_\phi + \epsilon_0 I_D)^{-1} \in \mathbb{S}_{++}^D$, with $\Omega_\phi \in \mathbb{S}_+^D$ denoting the encoder precision matrix and $\epsilon_0 > 0$ an isotropic regularizer.
\end{assumption}

Because the multivariate Gaussian maximizes differential entropy {among all distributions with matched first and second moments, this surrogate models a worst-case bound on structural uncertainty; furthermore, the $L_2$ flow matching regression objective depends exclusively on conditional first and second moments \cite{cover2006elements, bishop2006pattern}.}

\begin{assumption}[bottleneck dimensionality and capacity bounds]
\label{ass:capacity_bounds}
The encoder architecture imposes two structural capacity constraints on $\Omega_\phi$ \cite{tishby1999information, bartlett2017spectrallynormalizedmarginboundsneural}:
\begin{enumerate}[label=(\roman*),leftmargin=*]
    \item \textbf{Dimensionality bottleneck:} the latent bottleneck dimension $c \ll D$ restricts the surrogate precision rank: $\operatorname{rank}(\Omega_\phi) \le c$ \cite{tishby1999information, alemi2016deep};
    \item \textbf{Spectral norm bound:} finite network weights enforce an upper bound on its maximum eigenvalue: $\|\Omega_\phi\|_2 = \lambda_{\max}(\Omega_\phi) \le \omega_{\max} < \infty$ \cite{bartlett2017spectrallynormalizedmarginboundsneural}.
\end{enumerate}
\end{assumption}

\paragraph{Bayes-Optimal Encoder Surrogate Objective.}
With the irreducible target noise nullified ($\mathcal{E}_{\text{target}} = 0$), substituting the analytical target velocity $u_t$ (Eq.~\eqref{eq:analytical_u}) into the tripartite decomposition (Eq.~\eqref{eq:tripartite_loss}) yields the asymptotic surrogate loss for the encoder as $v_\theta \to v^*$:
\begin{equation}
    \mathcal{E}_{\text{repr}}^*(\phi) = \int_0^1 w(t) \, \mathbb{E}_{x_t, z} \left[ \operatorname{Tr} \Big( \Lambda_t^T \Lambda_t \, \operatorname{Cov}(x_0 \mid x_t, z) \Big) \right] dt,
    \label{eq:enc_loss_formal}
\end{equation}
where $w(t)$ is the temporal sampling density and $\operatorname{Cov}(x_0 \mid x_t, z)$ denotes the posterior covariance under the surrogate model.

\begin{lemma}[closed-form posterior covariance]
\label{lem:posterior_covariance}
Under the linear observation model $x_t = M_t x_0 + t x_1$ with noise $x_1 \sim \mathcal{N}(0, I_D)$ and structural drift $M_t = \gamma(t) P_c + \alpha(t) P_s$, the posterior covariance $\Sigma_{\text{post}}(t) = \operatorname{Cov}(x_0 \mid x_t, z)$, under Assumption~\ref{ass:gaussian_surrogate}, is given for all $t \in (0, 1]$ by
\begin{equation}
    \Sigma_{\text{post}}(t) = t^2 \Big( t^2 \Omega_\phi + \big(\gamma(t)^2 + t^2 \epsilon_0\big) P_c + \big(\alpha(t)^2 + t^2 \epsilon_0\big) P_s \Big)^{-1}.
    \label{eq:sigma_post_closed}
\end{equation}
\end{lemma}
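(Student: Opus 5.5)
This is a standard Gaussian linear-inverse computation: prior $x_0 \mid z \sim \mathcal{N}(\mu_{x\mid z}, \Sigma_\phi)$ from Assumption~\ref{ass:gaussian_surrogate}, and likelihood $x_t \mid x_0 \sim \mathcal{N}(M_t x_0, t^2 I_D)$ from the observation model with $x_1$ independent of $(x_0,z)$. We combine them through Bayes' rule in information (precision) form. Because the likelihood depends on $x_0$ only and $x_1 \perp z$, conditioning on $(x_t,z)$ gives the posterior
\[
p(x_0\mid x_t,z) \propto p(x_t\mid x_0)\,p(x_0\mid z).
\]
This is a product of two Gaussian densities in $x_0$, so the posterior is Gaussian. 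Its precision is the sum of the prior precision $\Sigma_\phi^{-1}=\Omega_\phi+\epsilon_0 I_D$ and the likelihood precision $M_t^T (t^2 I_D)^{-1} M_t = t^{-2} M_t^T M_t$. The key identity is therefore
\[
\Sigma_{\text{post}}(t)^{-1} = \Omega_\phi + \epsilon_0 I_D + t^{-2} M_t^T M_t .
\]
We obtain it by completing the square in the exponent, and it is independent of $x_t$ and $\mu_{x\mid z}$, as is usual for linear-Gaussian models.

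Next we simplify $M_t^T M_t$ using the projector axioms in Eq.~\eqref{eq:projector_axioms}. Since $P_c,P_s$ are symmetric, $M_t^T=M_t$. Expanding $M_t^2 = (\gamma P_c + \alpha P_s)^2$ and using $P_c^2=P_c$, $P_s^2=P_s$ and $P_cP_s=P_sP_c=\mathbf{0}$ gives $M_t^TM_t=\gamma(t)^2P_c+\alpha(t)^2P_s$. We also write $\epsilon_0 I_D = \epsilon_0(P_c+P_s)$ using $P_c+P_s=I_D$. Multiplying the precision by $t^2$, which is valid for $t\in(0,1]$, then yields
\[
t^2\Sigma_{\text{post}}^{-1} = t^2\Omega_\phi + (\gamma^2+t^2\epsilon_0)P_c + (\alpha^2+t^2\epsilon_0)P_s,
\]
and inverting gives Eq.~\eqref{eq:sigma_post_closed}.

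The only point requiring care is invertibility, and it is mild. The matrix being inverted is at least $t^2\epsilon_0 I_D \succ 0$ because $\Omega_\phi\succeq0$ and $\gamma^2,\alpha^2\ge0$. This bound holds even for $t>t_1$, where $\alpha(t)=0$ and the likelihood carries no information about the style subspace. So the inverse exists for every $t\in(0,1]$, and the $\epsilon_0$ regularizer in Assumption~\ref{ass:gaussian_surrogate} is exactly what guarantees this. The case $t=0$ is excluded because the noise covariance $t^2I_D$ degenerates there.

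A second subtlety is that the surrogate is a local conditional model $p(x_0\mid z)$ rather than a genuine joint law. We interpret $\operatorname{Cov}(x_0\mid x_t,z)$ as the posterior under this surrogate prior together with the exact Gaussian likelihood, which is how Eq.~\eqref{eq:enc_loss_formal} defines it.

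As an optional cross-check, we can verify the result through the Woodbury / Gaussian conditioning formula $\Sigma_\phi - \Sigma_\phi M_t^T(M_t\Sigma_\phi M_t^T+t^2I)^{-1}M_t\Sigma_\phi$. Applying the matrix inversion lemma to it reproduces the same inverse.
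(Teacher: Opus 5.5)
Your proposal is correct and takes essentially the same approach as the paper. Both add the prior precision $\Omega_\phi + \epsilon_0 I_D$ to the likelihood precision $t^{-2} M_t^T M_t$, reduce $M_t^T M_t$ to $\gamma(t)^2 P_c + \alpha(t)^2 P_s$ via the projector identities, split $\epsilon_0 I_D = \epsilon_0 (P_c + P_s)$, and invert after factoring out $t^{-2}$. Your invertibility argument (the $t^2 \epsilon_0 I_D \succ \mathbf{0}$ lower bound, valid even for $t > t_1$) and the Woodbury cross-check are sound additions that the paper leaves implicit.
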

\noindent\textit{Proof Sketch.} The result follows from linear Gaussian conjugacy by summing the surrogate prior precision $\Omega_\phi$ and observation precision $\frac{1}{t^2} M_t^T M_t$. Expanding $M_t^T M_t$ via the projector orthogonality relations $P_c P_s = \mathbf{0}$ and $P_k^2 = P_k$ yields the bracketed form. Full details are provided in Appendix~\ref{app:proof_lemma_posterior}.

\subsection{Optimal Subspace Disentanglement}
\label{sec:theory_disentanglement}

Substituting the decoupled drift penalty $\Lambda_t^T \Lambda_t$ and the posterior covariance $\Sigma_{\text{post}}(t)$ (Lemma~\ref{lem:posterior_covariance}) into the encoder surrogate risk yields the instantaneous objective:
\begin{equation}
    \mathcal{E}_{t, \,\text{repr}}^*(\phi) = t^2 \operatorname{Tr} \left( \Big(\lambda_c(t)^2 P_c + \lambda_s(t)^2 P_s\Big) \Big( t^2 \Omega_\phi + \big(\gamma(t)^2 + t^2 \epsilon_0\big) P_c + \big(\alpha(t)^2 + t^2 \epsilon_0\big) P_s \Big)^{-1} \right).
    \label{eq:instantaneous_surrogate_loss}
\end{equation}

\begin{theorem}[exact subspace disentanglement]
\label{thm:disentanglement}
In the orthonormal basis $V = [V_c, V_s] \in \mathbb{R}^{D \times D}$ spanning $\mathcal{X}_c$ and $\mathcal{X}_s$, the encoder precision matrix partitions as
\begin{equation}
    \tilde{\Omega}_\phi = V^T \Omega_\phi V = \begin{pmatrix} \Omega_{cc} & \Omega_{cs} \\ \Omega_{sc} & \Omega_{ss} \end{pmatrix},
\end{equation}
where $\Omega_{cs} = \Omega_{sc}^T \in \mathbb{R}^{D_c \times D_s}$ denotes the cross-subspace coupling block. For any fixed diagonal blocks $\Omega_{cc}$ and $\Omega_{ss}$, the global infimum of the surrogate loss over $\Omega_\phi \in \mathbb{S}_+^D$ is achieved if and only if:
\begin{equation}
    \Omega_{cs} = \mathbf{0} \quad \iff \quad [\Omega_\phi^*, P_c] = \mathbf{0}.
\end{equation}
\end{theorem}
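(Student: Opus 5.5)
The approach is to work entirely in the rotated coordinates $V=[V_c,V_s]$. There both the penalty operator $\Lambda_t^T\Lambda_t$ from Eq.~\eqref{eq:decoupled_penalty_operator} and the data-precision part of the posterior in Lemma~\ref{lem:posterior_covariance} are block-scalar, so the only coupling between the two subspaces comes from $\Omega_{cs}$. Write
\[
K_t = t^2\tilde\Omega_\phi + \begin{pmatrix} a_c(t) I_c & \mathbf{0}\\ \mathbf{0} & a_s(t) I_s\end{pmatrix}, \qquad a_c(t)=\gamma(t)^2+t^2\epsilon_0>0,\quad a_s(t)=\alpha(t)^2+t^2\epsilon_0>0 .
\]
Then $K_t\succ 0$ for every $\Omega_\phi\in\mathbb{S}_+^D$ and $t\in(0,1]$. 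By cyclicity of the trace and orthogonality of $V$, Eq.~\eqref{eq:instantaneous_surrogate_loss} becomes
\[
\mathcal{E}^*_{t,\mathrm{repr}} = t^2\big(\lambda_c(t)^2 \operatorname{Tr}[(K_t^{-1})_{cc}] + \lambda_s(t)^2 \operatorname{Tr}[(K_t^{-1})_{ss}]\big).
\]
The diagonal blocks of $K_t$ are $t^2\Omega_{cc}+a_cI$ and $t^2\Omega_{ss}+a_sI$, both fixed by hypothesis. The off-diagonal block is $t^2\Omega_{cs}$.

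The key step is the Schur-complement formula for the blocks of the inverse:
\[
(K_t^{-1})_{cc} = \big(K_{cc} - t^4\,\Omega_{cs}K_{ss}^{-1}\Omega_{sc}\big)^{-1}.
\]
The subtracted term is PSD, and it is nonzero whenever $\Omega_{cs}\neq\mathbf{0}$, because $K_{ss}^{-1}\succ0$. Since $X\mapsto X^{-1}$ is operator-decreasing on $\mathbb{S}_{++}$, we get $(K_t^{-1})_{cc}\succeq K_{cc}^{-1}$. The difference $(K_t^{-1})_{cc}-K_{cc}^{-1}$ is moreover nonzero when $\Omega_{cs}\neq 0$: equality of the inverses would force equality of the Schur complement with $K_{cc}$. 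A nonzero PSD matrix has strictly positive trace, so $\operatorname{Tr}[(K_t^{-1})_{cc}]>\operatorname{Tr}[K_{cc}^{-1}]$. The same argument gives $\operatorname{Tr}[(K_t^{-1})_{ss}]\ge \operatorname{Tr}[K_{ss}^{-1}]$. Both right-hand sides depend only on the fixed diagonal blocks, and they are attained exactly at $\Omega_{cs}=\mathbf{0}$. That choice stays feasible, since $\operatorname{diag}(\Omega_{cc},\Omega_{ss})\succeq0$ whenever $\tilde\Omega_\phi\succeq0$. This gives a pointwise-in-$t$ lower bound that is achieved precisely at the block-diagonal matrix.

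The step I expect to need the most care is strictness after integrating over $t$. The style weight $\lambda_s(t)^2$ vanishes for $t>t_1$, so I will not rely on the style block for strictness. Instead I use only the content block: $\lambda_c(t)^2=1/t^2>0$ on all of $(0,1]$. The strict content-block inequality therefore holds at every $t$, and it survives integration against $w(t)$, provided $w$ is positive on a set of positive measure. The expectation over $(x_t,z)$ is harmless, since the surrogate posterior covariance is deterministic given $\Omega_\phi$.

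The "only if" direction follows from this strictness, and the "if" direction from attainment of the bound. A secondary point is that Assumption~\ref{ass:capacity_bounds} is not imposed in the statement, which optimizes over $\mathbb{S}_+^D$. Block-diagonalizing can increase rank, so I will remark that the theorem concerns the unconstrained cone.

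Finally, the commutator equivalence: $[\Omega_\phi,P_c]=\mathbf{0}$ holds iff $[\tilde\Omega_\phi,\operatorname{diag}(I_c,\mathbf{0})]=\mathbf{0}$, by conjugating with $V$. A direct block computation gives this commutator as $\begin{pmatrix}\mathbf{0} & -\Omega_{cs}\\ \Omega_{sc} & \mathbf{0}\end{pmatrix}$, which vanishes iff $\Omega_{cs}=\mathbf{0}$.
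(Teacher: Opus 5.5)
Your proof is correct and follows essentially the same route as the paper's proof. Like the paper, you conjugate by $V$, write the diagonal blocks of the inverse as inverses of Schur complements, and use antitonicity of matrix inversion in the Loewner order to get a trace inequality that is strict whenever $\Omega_{cs}\neq\mathbf{0}$. Your additions make precise steps the paper leaves implicit: you draw strictness from the content block alone, since $\lambda_s(t)^2$ vanishes on $(t_1,1)$; you check that the block-diagonal completion stays in $\mathbb{S}_+^D$; and you compute the commutator explicitly. Your caveat that block-diagonalizing can raise rank, so the result concerns the unconstrained cone rather than the rank-bounded set of Assumption~\ref{ass:capacity_bounds}, is also correct.
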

\noindent\textit{Proof Sketch.} Inverting the block precision matrix $\tilde{S}(t) = t^2 V^T \Sigma_{\text{post}}(t)^{-1} V$ via Schur complements shows that cross-coupling terms induce negative semi-definite shifts in the denominator: $[\tilde{S}(t)^{-1}]_{cc} = (S_{cc}(t) - \Delta_c(t))^{-1}$, where $\Delta_c(t) = t^4 \Omega_{cs} S_{ss}(t)^{-1} \Omega_{sc} \succeq \mathbf{0}$. By Loewner order monotonicity of matrix inversion, any non-zero coupling ($\Omega_{cs} \neq \mathbf{0}$) inflates the trace objective across all $t \in (0, 1)$. Hence, global optimality requires $\Omega_{cs} = \mathbf{0}$. See Appendix~\ref{app:proof_thm_disentanglement} for the complete proof.

\subsection{Subspace Capacity Allocation via Truncation}
\label{sec:theory_paradox}

By Theorem~\ref{thm:disentanglement}, the optimal precision matrix shares a common eigenbasis with the orthogonal projectors $P_c$ and $P_s$, diagonalizing as $\Omega_\phi^* = \sum_{j=1}^D \omega_j v_j v_j^T$, where each unit eigenvector satisfies either $v_j \in \mathcal{X}_c$ or $v_j \in \mathcal{X}_s$. Let the binary indicator $\delta_j = \|P_c v_j\|_2^2 \in \{0, 1\}$ denote content subspace membership ($\delta_j = 1$ if $v_j \in \mathcal{X}_c$, and $\delta_j = 0$ if $v_j \in \mathcal{X}_s$).

\begin{lemma}[capacity saturation]
\label{lem:saturation}
Under Assumption~\ref{ass:capacity_bounds}, the global surrogate objective is strictly monotonically decreasing with respect to each eigenvalue $\omega_j \in [0, \omega_{\max}]$. Therefore, an optimal encoder saturates its bottleneck capacity: exactly $c$ eigenvalues attain the maximum bound $\omega_j = \omega_{\max}$, while the remaining $D - c$ are identically zero.
\end{lemma}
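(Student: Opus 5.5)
Building on Theorem~\ref{thm:disentanglement}, we can take the optimal precision to be simultaneously diagonalizable with $P_c$ and $P_s$, $\Omega_\phi = \sum_j \omega_j v_j v_j^T$ with each $v_j$ in $\mathcal{X}_c$ or $\mathcal{X}_s$. Substituting into Eq.~\eqref{eq:instantaneous_surrogate_loss}, the matrix inside the inverse is diagonal in the basis $\{v_j\}$. The instantaneous loss then separates into a sum of scalar terms:
\begin{equation*}
\mathcal{E}^*_{t,\text{repr}} = \sum_{j=1}^D \frac{t^2\big(\delta_j \lambda_c(t)^2 + (1-\delta_j)\lambda_s(t)^2\big)}{t^2\omega_j + \delta_j\big(\gamma(t)^2 + t^2\epsilon_0\big) + (1-\delta_j)\big(\alpha(t)^2 + t^2\epsilon_0\big)}.
\end{equation*}
Integrating against $w(t)$ gives the global objective $\sum_j F_j(\omega_j)$. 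Each $F_j$ depends only on its own eigenvalue, so the optimization decouples coordinatewise, subject to the coupling constraints of Assumption~\ref{ass:capacity_bounds}.

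The monotonicity claim follows by differentiating under the integral. For each $j$,
\begin{equation*}
F_j'(\omega_j) = -\int_0^1 w(t)\,\frac{t^4\,\lambda_{(j)}(t)^2}{\big(t^2\omega_j + \beta_{(j)}(t)\big)^2}\,dt,
\end{equation*}
where $\lambda_{(j)}$ and $\beta_{(j)}$ denote the relevant penalty and observation terms. This is strictly negative whenever $\lambda_{(j)}(t)^2 w(t) > 0$ on a set of positive measure. That holds for content, since $\lambda_c(t)^2 = 1/t^2$. It also holds for style, because $\lambda_s(t)^2$ is nonzero on part of $(0,t_1)$ by the derivation in Appendix~\ref{app:penalty_derivation}. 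Dominated convergence justifies the interchange: $\beta_{(j)}(t) \ge t^2\epsilon_0 > 0$ keeps the denominators bounded below by $t^4\epsilon_0^2$ up to constants. Combined with $t^4\lambda_c^2 = t^2$, this gives an integrable bound for $\omega_j \in [0,\omega_{\max}]$. Verifying integrability near $t=0$ for the style block, where $\alpha(t)\to 1$, is routine. Strict monotonicity, and convexity as a bonus since $1/(a\omega+b)$ is convex, then follows.

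For saturation, the feasible set is $\{\omega \in [0,\omega_{\max}]^D : \#\{j:\omega_j>0\} \le c\}$. Given any support set $S$ with $|S|\le c$, strict monotonicity forces every $\omega_j$ with $j\in S$ to equal $\omega_{\max}$, since raising any of them strictly lowers the loss. If $|S|<c$, activating an additional index at $\omega_{\max}$ strictly decreases the loss by the same argument. So an optimum uses exactly $c$ eigenvalues equal to $\omega_{\max}$ and the rest zero; the optimum exists because the feasible set is compact and the objective is continuous.

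The main obstacle is the strictness for style directions. If $w(t)\lambda_s(t)^2$ vanished almost everywhere, some $F_j$ would be constant, and "strictly" would fail for those indices. I would handle this by appealing to the explicit piecewise form of $\lambda_s$ on $(0,t_1)$: it is a nonvanishing analytic expression except at isolated points. I would also assume $w>0$ a.e. Which $c$ indices are selected, content versus style, is left to the subsequent truncation result.
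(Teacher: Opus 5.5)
Your proposal is correct and follows the paper's proof essentially step for step. You decouple via Theorem~\ref{thm:disentanglement} into scalar terms, differentiate each term in $\omega_j$ to get a strictly negative derivative, and fill the rank budget with $c$ eigenvalues at $\omega_{\max}$. Your positive-measure argument for style directions is more careful than the paper's claim that the integrand is positive for every $t\in(0,1)$, which fails on $(t_1,1)$ where $\lambda_s\equiv 0$.

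One small repair is needed. The majorant you build from $\beta_{(j)}(t)\ge t^2\epsilon_0$ gives $w(t)/(\epsilon_0^2 t^2)$ for the content block, which is not integrable at $t=0$. Use instead the uniform bound $(1-t)^2+t^2\epsilon_0\ge \epsilon_0/(1+\epsilon_0)$, and its analogue for the style block. Alternatively, skip differentiation entirely: each integrand is pointwise nonincreasing in $\omega_j$ and strictly decreasing wherever $w(t)\lambda_{(j)}(t)^2>0$.
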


Using Lemma~\ref{lem:saturation}, capacity allocation reduces to maximizing the integrated variance reduction $\Delta C = C(0) - C(\omega_{\max})$ over the active dimensions:
\begin{equation}
    \max_{\{\delta_j\}} \sum_{j=1}^c \Big[ \delta_j (\Delta C_c - \Delta C_s) + \Delta C_s \Big], \quad \delta_j \in \{0, 1\},
    \label{eq:capacity_linear_program}
\end{equation}
where the cumulative variance reductions for content ($\Delta C_c$) and style ($\Delta C_s$) evaluate to:
\begin{align}
    \Delta C_c = \int_0^1 w(t) \frac{\lambda_c(t)^2 t^4 \omega_{\max}}{[\gamma(t)^2 + t^2 \epsilon_0] \big[ t^2 (\omega_{\max} + \epsilon_0) + \gamma(t)^2 \big]} dt, \\
    \Delta C_s = \int_0^1 w(t) \frac{\lambda_s(t)^2 t^4 \omega_{\max}}{[\alpha(t)^2 + t^2 \epsilon_0] \big[ t^2 (\omega_{\max} + \epsilon_0) + \alpha(t)^2 \big]} dt.
    \label{eq:delta_C_definitions}
\end{align}

\begin{theorem}[subspace capacity allocation via horizon truncation]
\label{thm:truncation}
Let $\gamma(t) = 1 - t$, and let style decay exponentially with rate $\kappa > 0$ and truncation boundary $t_1 \in (0, 1]$: $\alpha(t) = (1 - t) e^{-\kappa t} \cdot \mathbb{I}(t < t_1)$:
\begin{enumerate}
    \item \textbf{Style domination under untruncated decay ($t_1 = 1$):} without truncation ($t_1 = 1$), style variance reduction exceeds content reduction ($\Delta C_s > \Delta C_c$) and the optimal encoder dedicates the bottleneck capacity to style ($\delta_j^* = 0$ for all $j \in \{1, \dots, c\}$).
    \item \textbf{Provable content alignment ($t_1 < 1$):} enforcing a hard truncation cutoff $t_1 < 1$ bounds $\Delta C_s$ to a finite constant, while $\Delta C_c$ diverges asymptotically as $t \to 1$. Consequently, $\Delta C_c \gg \Delta C_s$, compelling the optimal encoder to allocate all active dimensions to the content subspace ($\delta_j^* = 1$ for all $j \in \{1, \dots, c\}$).
\end{enumerate}
\end{theorem}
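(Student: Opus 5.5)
The plan is to make both integrands in Eq.~\eqref{eq:delta_C_definitions} explicit, compare them pointwise in $t$, and then isolate the endpoint $t\to 1$, where the two regimes behave differently. First I would compute the penalties from $\Lambda_t=\dot M_t-\tfrac1t M_t$ using the schedules of Eq.~\eqref{eq:style_schedule}. For content, $\lambda_c(t)=\dot\gamma-\gamma/t=-1-(1-t)/t=-1/t$, so $\lambda_c^2=1/t^2$. For style on $t<t_1$, $\lambda_s(t)=-e^{-\kappa t}\big(\tfrac1t+\kappa(1-t)\big)$. For $t>t_1$, $\alpha\equiv 0$ and $\dot\alpha\equiv 0$, so $\lambda_s=0$ almost everywhere there.

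Substituting these into Eq.~\eqref{eq:delta_C_definitions} gives integrands of the common form
\[
g(a,\ell;t)=\frac{\ell^2 t^4\omega_{\max}}{(a^2+t^2\epsilon_0)\,\big(t^2(\omega_{\max}+\epsilon_0)+a^2\big)},
\]
with $(a,\ell)=(\gamma,\lambda_c)$ for content and $(a,\ell)=(\alpha,\lambda_s)$ for style. I would record two elementary facts about $g$. It is decreasing in $a^2$. Also, the ratio $\ell^2/a^2$ is larger for style: $\lambda_s^2/\alpha^2=(1/t+\kappa(1-t))^2/(1-t)^2>1/(t^2(1-t)^2)=\lambda_c^2/\gamma^2$. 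Moreover $\alpha<\gamma$ for $t>0$.

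For part 2 ($t_1<1$), on $[0,t_1)$ we have $\alpha(t)^2\ge (1-t_1)^2e^{-2\kappa}>0$ and $\lambda_s^2 t^4\le e^{-2\kappa t}(t+\kappa t^2)^2$. The style integrand is therefore bounded by a constant independent of $\epsilon_0$. Since it vanishes on $(t_1,1]$, $\Delta C_s\le K(\kappa,t_1,\omega_{\max})<\infty$ uniformly as $\epsilon_0\to 0$. For content, near $t=1$ the integrand behaves like $\omega_{\max}/\big[((1-t)^2+\epsilon_0)(\omega_{\max}+\epsilon_0)\big]$. Integrating over a neighbourhood of $1$ gives $\Delta C_c\gtrsim w_{\min}\,\tfrac{\pi}{2\sqrt{\epsilon_0}}$, assuming $w$ is bounded below near $t=1$. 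So $\Delta C_c\to\infty$ as $\epsilon_0\to 0$. This is how I would make precise the statement's ``diverges asymptotically as $t\to 1$'': the mass near $t=1$ blows up in the vanishing-regulariser limit.

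Once $\Delta C_c>\Delta C_s$, the objective in Eq.~\eqref{eq:capacity_linear_program} is linear in the binary variables $\delta_j$ with a positive coefficient. By Lemma~\ref{lem:saturation} the optimum is therefore $\delta_j^*=1$ for all active $j$.

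For part 1 ($t_1=1$), I would aim for a pointwise domination of the style integrand over the content integrand. The idea is to write $g$ as $(\ell^2/a^2)\cdot t^4\omega_{\max}/(t^2(\omega_{\max}+\epsilon_0)+a^2)$ at $\epsilon_0=0$. Both factors are then larger for style by the two facts above, so the inequality is strict for each $t\in(0,1)$. I would then extend to small $\epsilon_0>0$ by continuity on compact subintervals $[\eta,1-\eta]$. Integrating gives $\Delta C_s>\Delta C_c$, and the linear program yields $\delta_j^*=0$.

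\textbf{Main obstacle.} The hardest step is the neighbourhood of $t=1$ in part 1. There both integrals concentrate as $\epsilon_0\to 0$, and my leading-order estimate gives style mass $\approx e^{-\kappa}\pi/(2\sqrt{\epsilon_0})$ against content mass $\approx \pi/(2\sqrt{\epsilon_0})$. This favours content, so pointwise domination fails in a boundary layer of width $\sqrt{\epsilon_0}$. I would first try to handle it by treating $\epsilon_0$ as a fixed regulariser rather than an asymptotic parameter. I would then add an explicit condition under which the interior gap beats the boundary-layer deficit: for instance $\epsilon_0\gtrsim$ a threshold depending on $\kappa$, or $w(t)$ vanishing near $t=1$. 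With such a condition, part 1 is proved in that regime, and the contrast with part 2 rests on the $\epsilon_0$-uniform boundedness of $\Delta C_s$ under truncation.
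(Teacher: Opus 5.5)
\paragraph{Part 2.} This is the paper's argument. The truncated style integrand is bounded on $[0,t_1)$ because $\alpha(t)^2$ stays away from zero there. The content integral picks up the $\int_0^{\delta}du/(u^2+\epsilon_0)\approx\pi/(2\sqrt{\epsilon_0})$ singularity at $t=1$, and the linear program then forces $\delta_j^*=1$. Your constant $K(\kappa,t_1,\omega_{\max})$ is explicitly uniform in $\epsilon_0$. The paper's extreme-value constant $M$ needs this uniformity but never states it, and the paper likewise uses your lower bound on $w$ near $t=1$ without saying so.

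\paragraph{Part 1.} You have a genuine gap here, but it cannot be closed, and the paper's proof breaks exactly where you flagged it. After dividing by $e^{-2\kappa t}$, the paper asserts that the product of the two denominator brackets is smaller for style. But the first bracket becomes $(1-t)^2+t^2\epsilon_0e^{2\kappa t}$, which is larger than $(1-t)^2+t^2\epsilon_0$. At $t=1$ the style-to-content ratio of the integrands is exactly $e^{-2\kappa}$.

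The defect is fatal, not technical. At $\epsilon_0=0$ the difference of the integrands near $t=1$ is only $2\kappa/(1-t)+\mathcal{O}(1)$, so style's interior gain is logarithmic. The boundary-layer loss, by contrast, is of order $\epsilon_0^{-1/2}$. For $w\equiv 1$,
\[
\Delta C_s-\Delta C_c=\kappa\ln(1/\epsilon_0)-\frac{\pi\,(1-e^{-\kappa})}{2\sqrt{\epsilon_0}}+\mathcal{O}(1)\longrightarrow-\infty \quad (\epsilon_0\to 0^+).
\]
So in the paper's own regime ($\epsilon_0\ll 1$, $t\sim\mathcal{U}(0,1)$), which is also the regime in which Part 2 is proved, $\Delta C_c>\Delta C_s$ even at $t_1=1$. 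The paper's Part 2 computation already shows this: the same singularity reappears in the untruncated $\Delta C_s$ with $\epsilon_0$ replaced by $\epsilon_0e^{2\kappa}$, hence with weight $e^{-\kappa}$.

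\paragraph{Your remedies.} A lower threshold on $\epsilon_0$ will not work. Wherever $t^2\epsilon_0\gg(1-t)^2$, the ratio of integrands tends to $[1+\kappa t(1-t)]^2e^{-2\kappa t}<e^{-2\kappa t^2}<1$, so a large regulariser hands the bulk to content as well.

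Clamping does work. Suppose $w$ is supported on $[0,1-\delta]$, and write the style-to-content ratio as $[1+\kappa t(1-t)]^2AB$. Here $B\ge 1$ comes from the second brackets, and
\[
A=\frac{(1-t)^2+t^2\epsilon_0}{(1-t)^2+t^2\epsilon_0e^{2\kappa t}}\ge 1-\frac{2\kappa\epsilon_0e^{2\kappa}t}{\delta^2}.
\]
Strict domination then holds on $(0,1-\delta]$ as soon as $\epsilon_0e^{2\kappa}(1+2\kappa\delta)<\delta^3$, and your $\epsilon_0=0$ argument goes through.

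You must then redo Part 2 in the same regime. $\Delta C_c$ no longer diverges as $\epsilon_0\to0$; it is of order $1/\delta$. It exceeds your truncated bound $K$ once $\delta$ is small enough, which is the paper's parenthetical $\mathcal{O}(1/\delta_{\text{clamp}})$. Stating both parts under one clamped regime is what makes the ``truncation is necessary'' contrast meaningful. Proving Part 2 via $\epsilon_0\to0$ without clamping, and Part 1 under a different hypothesis, does not establish that contrast.
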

\noindent\textit{Proof Sketch.} In Case 1, as $t \to 1$, the style denominator vanishes exponentially faster than the content term, causing the style integrand to dominate pointwise for all $t > 0$. In Case 2, style degradation vanishes for $t \ge t_1$, truncating the domain of $\Delta C_s$ to $[0, t_1]$ where the integrand remains bounded by the extreme value theorem. In contrast, the content integrand integrates over the full unit interval to $t=1$, where $\lambda_c(t)^2 \sim \mathcal{O}(1/(1-t)^2)$ introduces a boundary singularity scaling as $\mathcal{O}(1/\sqrt{\epsilon_0}) \to \infty$ as $\epsilon_0 \to 0^+$ (or $\mathcal{O}(1/\delta_{\text{clamp}})$ under boundary clamping $t \le 1 - \delta_{\text{clamp}}$), guaranteeing $(\Delta C_c - \Delta C_s) > 0$. The complete calculus is provided in Appendix~\ref{app:proof_thm_truncation}.

\section{Experiments}
\label{sec:experiments}

\subsection{Experimental Setup}
\label{sec:exp_setup}

\paragraph{Datasets and Evaluation Protocols.}
We validate our framework across three visual benchmarks \cite{krizhevsky2009learning, coates2011analysis}:
{CIFAR-10} ($32 \times 32$, 10 classes) for primary benchmarking and ablations;
{CIFAR-100} ($32 \times 32$, 100 classes) for granular category scaling; and
{STL-10} ($96 \times 96$) for higher spatial resolution. For STL-10, unsupervised pre-training uses all 105k images (5k labeled + 100k unlabeled), while downstream probes evaluate on the standard 5k train and 8k test splits.
Frozen representations are evaluated via linear probing, non-parametric $k$-NN classification ($k=20$), and few-shot learning (1\% and 10\% labels per class) \cite{chen2020simpleframeworkcontrastivelearning, he2021maskedautoencodersscalablevision}. To measure latent cluster separation independently of classifiers, we report the silhouette score on test embeddings. Finally, secondary generative quality is evaluated in Appendix~\ref{app:extended_experiments} via $\text{FID}_{1\text{k}}$ \cite{heusel2017gans}, numerical ODE integration error, and latent space traversals.

\paragraph{Architectures and Capacity Scaling.}
To isolate decoupled transport dynamics from model size, backbones are benchmarked under multiple capacity controls:
\begin{itemize}[leftmargin=*]
    \item \textbf{Convolutional Models (NCSN++ U-Net):} The representation encoder is a WideResNet-28-4 ($E_\phi$, 5.98M parameters, latent dimension $c = 256$) \cite{zagoruyko2017wideresidualnetworks}. The vector field regressor ($v_\theta$) is evaluated under three regimes:
    \begin{enumerate}[label=(\roman*),leftmargin=*]
        \item \emph{Baseline (20.81M total):} Base filters $nf = 64$ with channel multipliers $(1, 2, 2, 2)$ and self-attention at $16 \times 16$ (14.83M U-Net parameters), serving as our primary ablation testbed.
        \item \emph{High-Capacity (65.08M total):} Expanded filters $nf = 128$ with FIR anti-aliasing (59.10M U-Net parameters). Encoder capacity remains fixed at 5.98M, ensuring generative gains reflect regressor capacity rather than expanded latent bandwidth.
        \item \emph{High-Resolution STL-10 (24.89M total):} A 18.91M parameter U-Net regressor (paired with the 5.98M encoder) fit for $96 \times 96$ feature maps under fixed memory limits.
    \end{enumerate}

    \item \textbf{Vision Transformers (ViT/DiT):} A custom ViT encoder ($E_\phi$, 5.62M parameters; patch size $4 \times 4$, embedding dimension 256, depth 6, 8 heads, MLP ratio 4.0, class-token pooling) paired with a continuous diffusion transformer regressor (\textsc{DiTContinuous}, 58.17M parameters, $\approx 63.8$M total) \cite{ma2024sitexploringflowdiffusionbased, peebles2023scalable}. The regressor comprises 12 DiT blocks (patch size $2 \times 2$, hidden dimension 384, 6 heads, MLP ratio 4.0). Continuous latents $z \in \mathbb{R}^{128}$ are projected via a 2-layer SiLU MLP and injected additively with timestep embeddings into adaptive layer norm (adaLN-Zero) blocks.
\end{itemize}

\paragraph{Implementation and Compute Constraints.}
All models are trained on a single NVIDIA L4 GPU (22.5GB VRAM) using mixed precision (\texttt{bfloat16}). We optimize CNNs with Adam \cite{kingma2014adam} and vision transformers with AdamW \cite{loshchilov2017decoupled}, both at a base learning rate of $\eta = 2 \times 10^{-4}$. Default hyperparameters are style truncation horizon $t_1 = 0.5$, asymmetric routing cutoff $t_0 = 0.05$, radial Gaussian cutoff $r_{\text{low}} = 6$ (Eq.~\eqref{eq:proj_freq}), and SIGReg weight $\lambda_{\text{reg}} = 10^{-3}$. Under single-GPU resource limits, all benchmarks are evaluated across standard fixed evaluation splits using a single consistent random seed. Detailed training schedules and hyperparameters are in Appendix~\ref{app:hyperparameters}.

\subsection{Representation Quality on CIFAR-10}
\label{sec:exp_representation_cifar10}

We evaluate semantic representations on frozen embeddings $z = E_\phi(x_0)$ via linear probing (100\%, 10\%, 1\% labeled splits), non-parametric $k$-NN ($k=20$), and the cluster silhouette score. 
CSFM is benchmarked against four representative paradigms trained under the composite loss $\mathcal{L}_{\text{total}} = \mathcal{L}_{\text{task}} + \lambda_{\text{reg}} \mathcal{L}_{\text{SIGReg}}(z)$: unconstrained autoencoding (AE) with disabled spatial bypasses, single-step conditional denoising (CDAE), linear optimal transport flow (OT-Flow), and joint-embedding prediction (LeJEPA \cite{balestriero2025lejepaprovablescalableselfsupervised}) with asymmetric dual views and a batch-normalized projector. 
We set $\lambda_{\text{reg}} = 10^{-2}$ for generative baselines and $\lambda_{\text{reg}} = 3 \times 10^{-2}$ for LeJEPA. 
Appendix~\ref{app:baseline_details} details the baseline objective functions, augmentation parameters, projector architectures, and regularization ablations.

\begin{table}[htpb]
\centering
\caption{Representation benchmark on CIFAR-10 across generative and discriminative paradigms (40K steps, baseline 20.81M tier: WRN-28-4 encoder paired with 14.83M U-Net). Best generative result in {bold}.}
\label{tab:cifar10_benchmark}
\renewcommand{\arraystretch}{1.12}
\begin{tabular}{@{}llccccc@{}}
\toprule
\multirow{2}{*}{\textbf{Paradigm}} & \multirow{2}{*}{\textbf{Model}} & \multicolumn{3}{c}{\textbf{Linear Probe Accuracy}} & \multirow{2}{*}{\textbf{$k$-NN}} & \multirow{2}{*}{\textbf{Silhouette}} \\
\cmidrule(lr){3-5}
& & \textbf{100\%} & \textbf{10\%} & \textbf{1\%} & & \\
\midrule
\multirow{2}{*}{\textit{Pixel Recon.}} 
& Autoencoder (AE) & 58.07\% & 51.08\% & 37.45\% & 48.37\% & -0.0780 \\
& CDAE (CSFM Path) & 57.27\% & 50.68\% & 36.06\% & 48.81\% & -0.0599 \\
\midrule
\multirow{3}{*}{\textit{Flow Matching}} 
& OT-Flow & 67.11\% & 62.25\% & 45.75\% & 52.97\% & -0.0454 \\
& CSFM ($t_1=1.0$, No Trunc.) & 69.59\% & 63.33\% & 46.31\% & 64.14\% & -0.0098 \\
& \textbf{CSFM ($t_1=0.5$, Trunc.)} & \textbf{75.29\%} & \textbf{71.94\%} & \textbf{58.93\%} & \textbf{68.34\%} & \textbf{-0.0154} \\
\midrule
\textit{Discriminative} 
& LeJEPA  & {72.49\%} & {71.02\%} & {67.76\%} & {{71.65\%}} & {{0.1738}} \\
\bottomrule
\end{tabular}
\end{table}

\paragraph{Flow Regression vs. Single-Step Denoising.}
Truncated CSFM ($t_1=0.5$) outperforms both pure autoencoding (58.07\%) and unguided linear flow (OT-Flow at 67.11\%). 
Crucially, CSFM outperforms CDAE (57.27\%) despite both operating over identical marginal degradation states $x_t$. 
Directly optimizing pixel reconstruction error ($\|x_0 - \hat{x}_\theta\|_2^2$) causes the encoder to expend capacity fitting high-frequency pixel variations \cite{balestriero2024learningreconstructionproducesuninformative, vanassel2025jointembeddingvsreconstruction}. 
Conversely, continuous velocity field regression ($u_t$) forces the bottleneck to encode persistent content features required to guide transport trajectories through time.

\paragraph{Bridging Generative Reversibility and Discriminative Invariance.}
Under a budget-constrained training regime of 40K iterations, CSFM achieves 75.29\% and 71.94\% linear accuracy with 100\% and 10\% labels, compared to 72.49\% and 71.02\% for LeJEPA. 
Conversely, within this computation budget, LeJEPA yields higher performance under 1\% labels (67.76\%) and neighborhood metrics ($k$-NN: 71.65\%, silhouette: 0.1738). 
This tradeoff reflects distinct geometric organizations: LeJEPA collapses intra-class variance to maximize margins early in training, contracting effective dimensionality ($d_{90\%} = 9$) and creating non-invertible low-density gaps between classes (Figure~\ref{fig:tsne_analysis}, right). 
Standard unguided OT-Flow maintains continuous density across classes but suffers from dispersed, overlapping cluster boundaries ($d_{90\%} = 31$). 
In contrast, truncated CSFM ($t_1=0.5$) limits style-subspace variance while retaining ambient coverage ($d_{90\%} = 37$), producing linearly separable semantic clusters alongside a globally connected latent manifold (Figure~\ref{fig:tsne_analysis}, middle). 
Appendix~\ref{app:latent_structure} provides the corresponding spectral and covariance analyses.

\begin{figure*}[t]
  \centering
  \includegraphics[width=\textwidth]{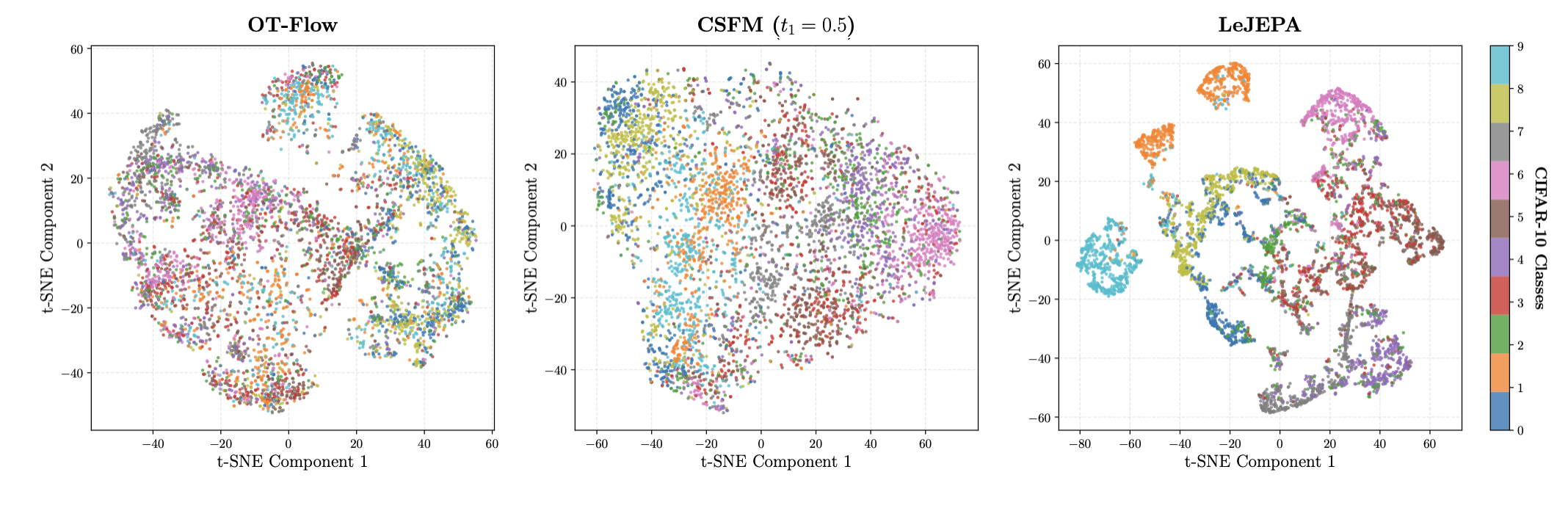}
  \caption{Topological manifold analysis via t-SNE embeddings on CIFAR-10 test set. (\textbf{Left}) OT-Flow forms dispersed, overlapping representations. (\textbf{Middle}) Truncated CSFM ($t_1=0.5$) isolates distinct semantic clusters while preserving global manifold connectivity. (\textbf{Right}) LeJEPA shatters representations into isolated topological islands.}
  \label{fig:tsne_analysis}
\end{figure*}

\paragraph{Decoder Capacity Scaling.}
In continuous flow representations, the vector field regressor $v_\theta$ acts as an implicit generative decoder for $z$. 
In Table~\ref{tab:scalability}, we scale the regressor to the high-capacity configuration (59.10M parameters) across an extended 120K step horizon while keeping the encoder capacity locked at 5.98M. 
Scaling $v_\theta$ boosts linear probing to 78.51\% and 1\% few-shot accuracy to 63.45\%. 
This confirms that a higher-capacity vector field exerts stronger variance-reduction pressure, driving the capacity-bounded encoder to extract more informative representations.

\begin{table}[htpb]
\centering
\caption{Decoder capacity scaling on CIFAR-10 with a fixed WideResNet-28-4 encoder (5.98M parameters) paired with the high-capacity U-Net regressor (59.10M parameters).}
\label{tab:scalability}
\renewcommand{\arraystretch}{1.1}
\begin{tabular}{@{}llcccc@{}}
\toprule
\textbf{Model} & \textbf{Steps} & \textbf{Linear (100\%)} & \textbf{1\% Few-Shot} & \textbf{$k$-NN} & \textbf{Silhouette} \\
\midrule
\multirow{3}{*}{\textbf{CSFM ($t_1=0.5$)}} 
& 40K & 77.31\% & \textbf{63.74\%} & \textbf{72.08\%} & \textbf{0.0068} \\
& 80K & \textbf{78.59\%} & 63.72\% & 71.77\% & -0.0038 \\
& 120K & 78.51\% & 63.45\% & 70.73\% & -0.0166 \\
\midrule
\multirow{3}{*}{CSFM ($t_1=1.0$)} 
& 40K & 69.86\% & 48.16\% & 64.54\% & -0.0099 \\
& 80K & 72.19\% & 49.01\% & 66.09\% & -0.0114 \\
& 120K & 73.59\% & 49.34\% & 66.74\% & -0.0154 \\
\midrule
\multirow{3}{*}{OT-Flow} 
& 40K & 68.19\% & 45.45\% & 53.81\% & -0.0433 \\
& 80K & 69.71\% & 45.57\% & 50.16\% & -0.0447 \\
& 120K & 70.48\% & 45.69\% & 48.17\% & -0.0534 \\
\bottomrule
\end{tabular}
\end{table}


\begin{table}[htpb]
\centering
\caption{Ablation of temporal boundaries on CIFAR-10 (20.81M baseline tier, 40K steps).}
\label{tab:ablation_horizons}
\renewcommand{\arraystretch}{1.1}
\begin{tabular}{@{}lcccc@{}}
\toprule
\textbf{Configuration} & \textbf{Linear Probe} & \textbf{1\% Few-Shot} & \textbf{$k$-NN} & \textbf{Silhouette} \\
\midrule
\multicolumn{5}{l}{\textit{Asymmetric Routing Cutoff ($t_0$) with $t_1=1.0$}} \\
$t_0 = 0.0$ (No Routing) & 69.09\% & 46.41\% & 63.13\% & -0.0185 \\
$t_0 = 0.001 $ & 69.12\% & 46.00\% & 63.04\% & -0.0202 \\
$t_0 = 0.025$ & 69.22\% & \textbf{47.84\%} & 64.23\% & -0.0126 \\
\textbf{$t_0 = 0.050$} & \textbf{69.52\%} & 47.41\% & \textbf{64.38\%} & \textbf{-0.0105} \\
\midrule
\multicolumn{5}{l}{\textit{Style Truncation Horizon ($t_1$) with $t_0=0.05$}} \\
$t_1 = 1.0$ (No Truncation) & 69.52\% & 47.41\% & 64.38\% & \textbf{-0.0105} \\
$t_1 = 0.8$ & 69.77\% & 47.49\% & 64.16\% & -0.0134 \\
$t_1 = 0.6$ & 72.22\% & 53.47\% & 66.12\% & -0.0257 \\
\textbf{$t_1 = 0.5$} & \textbf{75.12\%} & 57.40\% & \textbf{68.35\%} & -0.0215 \\
$t_1 = 0.4$ & 73.13\% & \textbf{57.70\%} & 64.99\% & -0.0185 \\
\bottomrule
\end{tabular}
\end{table}


\paragraph{Empirical Corroboration of Theoretical Horizons ($t_0, t_1$).}
Table~\ref{tab:ablation_horizons} evaluates the temporal integration boundaries on the 20.81M baseline tier, corroborating the theoretical analysis in Section~\ref{sec:theory}. 
Severing the computational graph near $t \to 0$ with $t_0 = 0.05$ shields the encoder from asymptotic gradient vanishing, improving linear probe accuracy from 69.09\% to 69.52\%. 
More critically, introducing a style cutoff at $t_1 = 0.5$ corroborates Theorem~\ref{thm:truncation} by mitigating style-subspace domination: it flips the variance reduction inequality ($\Delta C_s > \Delta C_s$), increasing linear probing accuracy to 75.12\% and 1\% few-shot accuracy to 57.40\% as capacity shifts from high-frequency style to content.

\paragraph{Physical Decay Dynamics and Latent Regularization.}
Extended ablations in Appendix~\ref{app:extended_experiments} corroborate the remaining theoretical remarks. 
Under optimal truncation ($t_1 = 0.5$), representation quality remains stable across three orders of magnitude of the base decay rate ($\kappa \in [10^{-4}, 6.0]$), consistently maintaining $>75\%$ linear accuracy (Table~\ref{tab:app_ablation_physics}). 
This corroborates Theorem~\ref{thm:truncation} by confirming that the discrete truncation boundary $t_1$, rather than continuous path curvature $\kappa$, governs content isolation. 
Conversely, without truncation ($t_1 = 1.0$), tuning $\kappa$ and $\beta$ yields negligible gains ($\le 69.60\%$), confirming that continuous parameter adjustments cannot substitute for boundary truncation. 
Finally, ablating the regularization weight in Table~\ref{tab:app_ablation_reg} corroborates Eq.~\eqref{eq:variance_reduction_bound}: while omitting regularization ($\lambda_{\text{reg}} = 0$) does not induce representational collapse, incorporating the constraint at $\lambda_{\text{reg}} = 10^{-3}$ yields measurable performance gains (69.09\% linear accuracy), whereas over-penalizing ($\lambda_{\text{reg}} = 0.1$) degrades downstream separability.

\subsection{Scalability Analysis: Cardinality, Architecture, and Resolution}
\label{sec:exp_scalability}

To establish generality beyond CIFAR-10, we evaluate CSFM across three scaling dimensions under controlled parameter budgets within each tier (Table~\ref{tab:scalability_benchmark}):
\begin{enumerate}[label=(\roman*),leftmargin=*]
    \item {Semantic cardinality (CIFAR-100):} scaling category complexity tenfold ($10 \to 100$) while reducing training data to 500 images per class.
    \item {Architectural inductive bias (ViT / DiT):} replacing convolutional inductive biases with patch tokenization and global multi-head self-attention.
    \item {Spatial resolution scaling (STL-10):} scaling from $32 \times 32$ to  $96 \times 96$ pixels on an uncurated 105k image pool.
\end{enumerate}

\begin{table*}[t]
\centering
\caption{Scalability benchmarks across semantic cardinality (CIFAR-100), backbones (CNN vs.\ ViT/DiT), and spatial resolution (STL-10). Evaluated under strict parameter parity within each tier. Best results per tier in {bold}.}
\label{tab:scalability_benchmark}
\renewcommand{\arraystretch}{1.15}
\begin{tabular}{@{}lllcccc@{}}
\toprule
\textbf{Benchmark Axis} & \textbf{Dataset} & \textbf{Model Backbone} & \textbf{Steps} & \textbf{Linear Probe} & \textbf{1\% Few-Shot} & \textbf{$k$-NN ($k=20$)} \\
\midrule
\multirow{2}{*}{\textit{Semantic Cardinality}} & CIFAR-100 & OT-Flow (CNN) & 40K & 42.27\% & 10.66\% & 33.63\% \\
& CIFAR-100 & \textbf{CSFM (CNN)} & 40K & \textbf{51.58\%} & \textbf{17.65\%} & \textbf{38.97\%} \\
\midrule
\multirow{4}{*}{\textit{Architectural Bias}} 
& CIFAR-10 & OT-Flow (ViT/DiT) & 40K & 51.84\% & 39.15\% & 56.81\% \\
& CIFAR-10 & OT-Flow (ViT/DiT) & 80K & 53.27\% & 41.06\% & 58.36\% \\
\cmidrule(lr){3-7}
& CIFAR-10 & CSFM (ViT/DiT) & 40K & 53.30\% & 42.27\% & 55.14\% \\
& CIFAR-10 & \textbf{CSFM (ViT/DiT)} & 80K & \textbf{61.49\%} & \textbf{49.69\%} & \textbf{62.77\%} \\
\midrule
\multirow{2}{*}{\textit{Spatial Resolution}} 
& STL-10 & OT-Flow (CNN) & 40K & 63.90\% & 48.34\% & 55.62\% \\
& STL-10 & \textbf{CSFM (CNN)} & 40K & \textbf{65.74\%} & \textbf{50.54\%} & \textbf{57.91\%} \\
\bottomrule
\end{tabular}
\end{table*}

\paragraph{Semantic Cardinality: CIFAR-100.}
Evaluating on CIFAR-100 tests representation robustness under higher label granularity and limited per-class supervision. 
Standard OT-Flow achieves 42.27\% linear accuracy and drops to 10.66\% with 1\% labeled data (5 samples per class). 
In contrast, CSFM attains {51.58\%} linear accuracy ($+9.31\%$) and {17.65\%} on 1\% labels ($+6.99\%$). 
While uniform OT perturbation forces the regressor to resolve high-frequency noise across all scales, CSFM suppresses style variance along the trajectory, preserving linear separability in fine-grained regimes.

\paragraph{Architectural Scalability: Vision Transformers (ViT/DiT).}
Evaluating transformer backbones demonstrates that CSFM is architecture-agnostic. 
CSFM scales from $53.30\%$ at 40K steps to {61.49\%} at 80K steps ($+8.19\%$), while standard OT-Flow yields diminishing returns ($51.84\% \to 53.27\%$). 
On 1\% few-shot labels, CSFM achieves {49.69\%}, outperforming OT-Flow by $+8.63\%$. 
Transformers lack local translational inductive biases and typically require aggressive data augmentation; CSFM’s structured trajectories provide a stabilizing generative prior that guides token representations.

\paragraph{Spatial Resolution Scaling: STL-10 ($96 \times 96$).}
Benchmarking native $96 \times 96$ images on STL-10 tests scalability to higher pixel dimensions. 
Due to GPU memory constraints at this resolution, we employ a lighter, lower-capacity U-Net decoder ($\approx$ 19M params) and reduce the batch size from 64 to 16; as established in our CIFAR-10 capacity ablations, reduced decoder capacity inherently lowers representation quality.
Nevertheless, pre-training on the 105k uncurated images and evaluating frozen features on the 5k labeled split yields consistent gains over OT-flow: CSFM achieves 65.74\% linear accuracy ($+1.84\%$), {50.54\%} on 1\% labels ($+2.20\%$), and {57.91\%} on $k$-NN ($+2.29\%$). 
This confirms that physical subspace decomposition scales to higher image resolutions without sacrificing content selectivity.

\section{Conclusion}
\label{sec:conclusion}

In this paper, we introduced composite subspace flow matching (CSFM), a generative framework that repurposes continuous normalizing flows for representation learning. 
By pairing deterministic drift across canonical visual subspaces with a decoupled noise path, CSFM eliminates the irreducible target variance floor of multi-source stochastic diffusion while evaluating in $\mathcal{O}(D \log D)$ time. 
We showed that untruncated exponential degradation causes encoders to allocate capacity to high-frequency style details, and proved that a finite style cutoff ($t_1 < 1$) enforces strict feature allocation toward the content subspace $\mathcal{X}_c$. 
Empirically, CSFM  outperforms unguided flows and pixel-level denoising autoencoders across CIFAR-10, CIFAR-100, and STL-10. Under resource-limited training schedules, it proves competitive with discriminative joint-embedding baselines on linear probes while scaling across convolutional and transformer architectures and retaining generative invertibility.

\section*{Code Availability}

Complete code and experimental configurations are publicly available at \url{https://github.com/joseAf28/csfm-paper}

\bibliographystyle{unsrtnat}
\bibliography{references}

\appendix

\section{Omitted Derivations from Section 3}

\subsection{Simultaneous Diagonalization of Physical Subspace Operators}
\label{app:simultaneous_diagonalization}

We show that the color, spectral frequency, and illumination operators defined in Section~\ref{sec:composite_subspaces} mutually commute and admit a common orthonormal eigenbasis. 
Recall that the observation space factors as $\mathbb{R}^D \cong \mathbb{R}^3 \otimes \mathbb{R}^N$ ($D = 3N$), where $\mathbb{R}^3$ denotes the color space and $\mathbb{R}^N$ ($N = HW$) corresponds to the spatial component. 
Because the style operators are orthogonal complements ($P_s = I_D - P_c$), an operator commutes with $P_c$ if and only if it commutes with $P_s$, since $[A, I_D - P_c] = -[A, P_c]$. 
Thus, establishing commutativity for the content projectors $P_c$ suffices for all associated style projectors.

\begin{proposition}
\label{prop:joint_basis}
The operators $P_c^{\rm color}$, $P_c^{\rm freq}$, and $P_c^{\rm illum}$ are mutually commuting symmetric operators:
\begin{equation}
    [P_c^{\rm color}, P_c^{\rm freq}] = [P_c^{\rm color}, P_c^{\rm illum}] = [P_c^{\rm freq}, P_c^{\rm illum}] = \mathbf{0}.
    \label{eq:pairwise_commutation}
\end{equation}
Consequently, they are simultaneously diagonalized by the separable orthonormal basis:
\begin{equation}
    Q = U_{\rm color} \otimes \mathcal{C} \in \mathbb{R}^{D \times D},
    \label{eq:joint_basis_matrix}
\end{equation}
where $U_{\rm color} \in \mathbb{R}^{3 \times 3}$ is the orthogonal Ohta matrix and $\mathcal{C} = C_W \otimes C_H \in \mathbb{R}^{N \times N}$ is the 2D orthonormal DCT-II basis. 
Any linear composition of these operators evaluates in $\mathcal{O}(D \log D)$ time.
\end{proposition}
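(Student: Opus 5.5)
The plan is to write each operator explicitly in Kronecker form on $\mathbb{R}^3 \otimes \mathbb{R}^N$ and show that each is of the form $Q^T \Lambda Q$ with $\Lambda$ diagonal. Commutativity then follows at once, since diagonal matrices commute. Doing this first is more economical than checking the three commutators directly.

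\textbf{Step 1: the color projector.} By definition $P_c^{\rm color} = (U_{\rm color}^T \operatorname{diag}(1,0,0) U_{\rm color}) \otimes I_N$. Write $I_N = \mathcal{C}^T \mathcal{C}$, using the orthonormality of $C_W$ and $C_H$, which gives $\mathcal{C}^T\mathcal{C} = (C_W^T C_W)\otimes(C_H^T C_H) = I_N$. The mixed-product rule then gives
\[
P_c^{\rm color} = (U_{\rm color}\otimes\mathcal{C})^T \big(\operatorname{diag}(1,0,0)\otimes I_N\big)(U_{\rm color}\otimes\mathcal{C}),
\]
which is $Q^T \Lambda^{\rm color} Q$ with $\Lambda^{\rm color}$ diagonal.

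\textbf{Step 2: the frequency projector.} Eq.~\eqref{eq:proj_freq} is a radial mask applied coefficientwise to $Y = C_H X C_W^T$, independently for each channel. Vectorizing, $\operatorname{vec}(C_H X C_W^T) = (C_W\otimes C_H)\operatorname{vec}(X) = \mathcal{C}\operatorname{vec}(X)$, so per channel the operator is $\mathcal{C}^T G \mathcal{C}$ with $G = \operatorname{diag}(P_c^{\rm freq}(u,v))$. On the full space it is therefore $I_3 \otimes \mathcal{C}^T G\mathcal{C}$. Writing $I_3 = U_{\rm color}^T I_3 U_{\rm color}$ gives $Q^T(I_3\otimes G)Q$.

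I need to be careful here about whether "applying the filter per channel" is intended in RGB or in Ohta coordinates. Because the mask acts as the identity on the color factor, the two choices agree, and this is exactly why the commutation holds. The same argument covers $P_s^{\rm freq} = I_D - P_c^{\rm freq}$.

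\textbf{Step 3: the illumination projector, which I expect to be the main obstacle.} The issue is that $v_{\rm DC} = \frac{1}{\sqrt D}\mathbf{1}_3\otimes\mathbf{1}_N$ must be a basis vector of $Q$, and this depends on the conventions. Two facts are needed.

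First, $\mathbf{1}_N/\sqrt N$ is the $(0,0)$ DCT-II basis vector. Row $0$ of $C_N$ equals $\sqrt{2/N}\cdot\frac{1}{\sqrt2}\mathbf{1} = \mathbf{1}/\sqrt N$, so $C_H \mathbf{1}_H = \sqrt H e_0$, and similarly for $C_W$.

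Second, the first row of $U_{\rm color}$ is $\mathbf{1}_3^T/\sqrt3$, so $U_{\rm color}\mathbf{1}_3 = \sqrt3 e_1$.

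Together these give $Q v_{\rm DC} = e_1\otimes e_{(0,0)}$, a standard basis vector. Hence $v_{\rm DC}v_{\rm DC}^T = Q^T E Q$, where $E$ is the single-entry diagonal matrix, and $P_c^{\rm illum} = Q^T(I_D - E)Q$.

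\textbf{Step 4: conclusions.} Since all three operators are conjugates of diagonal matrices by the same orthogonal $Q$, every pairwise commutator vanishes: $Q^T[\Lambda_i,\Lambda_j]Q = 0$. The operators are symmetric because $\Lambda$ is real diagonal and $Q$ is orthogonal. Any polynomial or linear composition is $Q^T(\text{diagonal})Q$.

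For complexity, applying $Q$ factors into a $3\times3$ color mix costing $\mathcal{O}(N)$ and two separable fast cosine transforms per channel costing $\mathcal{O}(N\log N)$. A diagonal scaling costs $\mathcal{O}(D)$, and $Q^T$ costs the same as $Q$. The total is $\mathcal{O}(D\log D)$.
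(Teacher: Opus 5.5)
Your proof is correct and uses the same ingredients as the paper: the Kronecker factorization over $\mathbb{R}^3 \otimes \mathbb{R}^N$, the mixed-product rule, and the facts that the first Ohta row is $\mathbf{1}_3^T/\sqrt{3}$ and that $\mathcal{C}\mathbf{1}_N/\sqrt{N} = e_{(0,0)}$, which together make $v_{\rm DC}$ a column of $Q^T$. The only difference is the order: the paper checks the three commutators first, appeals to the spectral theorem, and then lists the diagonal forms $Q P Q^T$, whereas you exhibit $Q^T \Lambda Q$ for each operator directly and get commutativity as a corollary — the tighter argument, since commutativity alone only guarantees some common eigenbasis and does not by itself single out $Q$.
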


\begin{proof}
We express each operator over the tensor product space $\mathbb{R}^3 \otimes \mathbb{R}^N$:
\begin{align}
    P_c^{\rm color} &= \Pi_{\rm color} \otimes I_N, 
    &&\text{where } \Pi_{\rm color} = U_{\rm color}^T \operatorname{diag}(1, 0, 0) U_{\rm color} = u_{\rm DC} u_{\rm DC}^T \in \mathbb{R}^{3 \times 3}, \\
    P_c^{\rm freq} &= I_3 \otimes \Pi_{\rm freq}, 
    &&\text{where } \Pi_{\rm freq} = \mathcal{C}^T \Lambda_{\rm freq} \mathcal{C} \in \mathbb{R}^{N \times N}, \\
    P_c^{\rm illum} &= I_D - \Pi_{\rm color} \otimes \Pi_w, 
    &&\text{where } \Pi_w = w_{\rm DC} w_{\rm DC}^T \in \mathbb{R}^{N \times N},
\end{align}
with $u_{\rm DC} = \frac{1}{\sqrt{3}}\mathbf{1}_3$, $w_{\rm DC} = \frac{1}{\sqrt{N}}\mathbf{1}_N$, and $\Lambda_{\rm freq} = \operatorname{diag}(\{P_c^{\rm freq}(u, v)\}_{u, v})$. 
The term $v_{\rm DC} = u_{\rm DC} \otimes w_{\rm DC}$ reflects the normalized spatial-color DC vector, so that $P_s^{\rm illum} = v_{\rm DC} v_{\rm DC}^T = \Pi_{\rm color} \otimes \Pi_w$.

Pairwise commutativity follows from the Kronecker mixed-product identity $(A_1 \otimes B_1)(A_2 \otimes B_2) = (A_1 A_2) \otimes (B_1 B_2)$. 
For color and frequency, the operators act on disjoint tensor factors:
\begin{equation}
    P_c^{\rm color} P_c^{\rm freq} = (\Pi_{\rm color} I_3) \otimes (I_N \Pi_{\rm freq}) = \Pi_{\rm color} \otimes \Pi_{\rm freq} = P_c^{\rm freq} P_c^{\rm color}.
\end{equation}
For color and illumination, commutativity reduces to the color factor, where $\Pi_{\rm color}$ is idempotent ($\Pi_{\rm color}^2 = \Pi_{\rm color}$):
\begin{equation}
    P_c^{\rm color} (I_D - P_c^{\rm illum}) = (\Pi_{\rm color}^2) \otimes (I_N \Pi_w) = \Pi_{\rm color} \otimes \Pi_w = (I_D - P_c^{\rm illum}) P_c^{\rm color},
\end{equation}
implying $[P_c^{\rm color}, P_c^{\rm illum}] = \mathbf{0}$. 

Finally, for frequency and illumination, the zero-frequency 2D DCT-II basis function is constant, so $\mathcal{C} w_{\rm DC} = e_{(0,0)}$, where $e_{(0,0)} \in \mathbb{R}^N$ denotes the canonical basis unit vector corresponding to the zero-frequency $(u, v) = (0, 0)$ coordinate. 
The projected spatial DC operator is diagonal in the transform domain: $\mathcal{C} \Pi_w \mathcal{C}^T = e_{(0,0)} e_{(0,0)}^T = \operatorname{diag}(1, 0, \dots, 0)$. 
Because both $\Lambda_{\rm freq}$ and $\mathcal{C} \Pi_w \mathcal{C}^T$ are diagonal matrices in the DCT basis, they commute:
\begin{equation}
    [\Pi_{\rm freq}, \Pi_w] = \mathcal{C}^T [\Lambda_{\rm freq}, \mathcal{C} \Pi_w \mathcal{C}^T] \mathcal{C} = \mathbf{0},
\end{equation}
which establishes $[P_c^{\rm freq}, P_c^{\rm illum}] = \mathbf{0}$.

Because all three symmetric operators commute pairwise, the spectral theorem guarantees simultaneous diagonalization by $Q = U_{\rm color} \otimes \mathcal{C}$. 
In this joint eigenbasis, each operator simplifies to a diagonal matrix:
\begin{align}
    Q P_c^{\rm color} Q^T &= \operatorname{diag}(1, 0, 0) \otimes I_N, \\
    Q P_c^{\rm freq} Q^T &= I_3 \otimes \Lambda_{\rm freq}, \\
    Q P_c^{\rm illum} Q^T &= I_D - \operatorname{diag}(1, 0, 0) \otimes \operatorname{diag}(1, 0, \dots, 0).
\end{align}
Evaluating transformations under $Q$ requires only channel-wise 2D fast cosine transforms, ensuring that any composition, power, or inverse covariance filtering executes in $\mathcal{O}(D \log D)$ time.
\end{proof}

\subsection{Derivation of the Tripartite Variance Decomposition}
\label{app:tripartite_derivation}

We derive the tripartite loss decomposition in Eq.~\eqref{eq:tripartite_loss} for the conditional objective:
\begin{equation}
    \mathcal{L}_{\text{CFM}}(\theta, \phi) = \mathbb{E}_{t, x_0, x_1} \left\| v_\theta(x_t, t, z) - u_t \right\|_2^2,
\end{equation}
where $t \sim \mathcal{U}[0, 1]$, $x_0 \sim p_{\text{data}}$, $x_1 \sim \mathcal{N}(0, I_D)$, and $z = E_\phi(x_0)$.

Decomposing around the optimal predictor $v^*(x_t, t, z) \triangleq \mathbb{E}[u_t \mid x_t, t, z]$:
\begin{equation}
    \| v_\theta(x_t, t, z) - u_t \|_2^2 = \| v_\theta - v^* \|_2^2 + \| v^* - u_t \|_2^2 + 2 \langle v_\theta - v^*, v^* - u_t \rangle.
\end{equation}
Conditioning on $(x_t, t, z)$, the cross-term vanishes:
\begin{equation}
    \mathbb{E}\big[ \langle v_\theta - v^*, v^* - u_t \rangle \;\big|\; x_t, t, z \big] = \langle v_\theta - v^*, v^* - \mathbb{E}[u_t \mid x_t, t, z] \rangle = 0.
\end{equation}
Taking total expectations yields:
\begin{equation}
    \mathcal{L}_{\text{CFM}}(\theta, \phi) = \underbrace{\mathbb{E} \| v_\theta(x_t, t, z) - v^*(x_t, t, z) \|_2^2}_{\mathcal{E}_{\text{model}}(\theta)} + \mathbb{E} \| u_t - \mathbb{E}[u_t \mid x_t, t, z] \|_2^2.
    \label{eq:app_step1}
\end{equation}

The second term in Eq.~\eqref{eq:app_step1} is the trace of the conditional covariance:
\begin{equation}
    \mathbb{E} \| u_t - \mathbb{E}[u_t \mid x_t, t, z] \|_2^2 = \mathbb{E}_{t, x_t, z} \left[ \operatorname{Tr}\Big(\operatorname{Var}(u_t \mid x_t, t, z)\Big) \right].
\end{equation}
Since $z = E_\phi(x_0)$, the sub-$\sigma$-algebras satisfy $\sigma(x_t, t, z) \subseteq \sigma(x_t, t, x_0)$. Applying the law of total variance conditioned on $(x_t, t, z)$ with respect to $x_0$:
\begin{equation}
    \operatorname{Var}(u_t \mid x_t, t, z) = \operatorname{Var}_{x_0 \mid x_t, t, z}\Big( \mathbb{E}[u_t \mid x_t, t, x_0] \Big) + \mathbb{E}_{x_0 \mid x_t, t, z}\Big[ \operatorname{Var}(u_t \mid x_t, t, x_0) \Big].
\end{equation}
Taking the trace and outer expectations over $(t, x_t, z)$ yields:
\begin{equation}
    \mathbb{E} \| u_t - \mathbb{E}[u_t \mid x_t, t, z] \|_2^2 = \underbrace{\mathbb{E} \left[ \operatorname{Tr}\Big(\operatorname{Var}_{x_0 \mid x_t, z}\big(\mathbb{E}[u_t \mid x_t, x_0]\big)\Big) \right]}_{\mathcal{E}_{\text{repr}}(\phi)} + \underbrace{\mathbb{E} \left[ \operatorname{Tr}\Big(\operatorname{Var}(u_t \mid x_t, x_0)\Big) \right]}_{\mathcal{E}_{\text{target}}}.
\end{equation}
Substituting back into Eq.~\eqref{eq:app_step1} completes the derivation of Eq.~\eqref{eq:tripartite_loss}. \qed

\subsection{Derivation of the Cumulative Variance Floor}
\label{app:cumulative_limitations}
Consider $x_t = M_t x_0 + \mathbf{B}_t \boldsymbol{\epsilon}$, where $\boldsymbol{\epsilon} \sim \mathcal{N}(0, I_{KD})$ and $\mathbf{B}_t \in \mathbb{R}^{D \times KD}$ with $KD > D$. 
Given $(x_t, x_0)$, the posterior distribution of $\boldsymbol{\epsilon}$ is Gaussian with conditional covariance:
\begin{equation}
    \operatorname{Cov}(\boldsymbol{\epsilon} \mid x_t, x_0) = I_{KD} - \mathbf{B}_t^T (\mathbf{B}_t \mathbf{B}_t^T)^{-1} \mathbf{B}_t.
\end{equation}
Because $u_t = \dot{M}_t x_0 + \dot{\mathbf{B}}_t \boldsymbol{\epsilon}$, the residual target velocity variance evaluates to:
\begin{equation}
    \operatorname{Var}(u_t \mid x_t, x_0) = \dot{\mathbf{B}}_t \Big( I_{KD} - \mathbf{B}_t^T (\mathbf{B}_t \mathbf{B}_t^T)^{-1} \mathbf{B}_t \Big) \dot{\mathbf{B}}_t^T \succ \mathbf{0},
\end{equation}
which is non-zero whenever $\dot{\mathbf{B}}_t$ has components in the null space of $\mathbf{B}_t$.

\subsection{Derivation of Decoupled Penalty Weights}
\label{app:penalty_derivation}

Substituting the orthogonal subspace decomposition $M_t = \gamma(t) P_c + \alpha(t) P_s$ into the velocity drift matrix $\Lambda_t = \dot{M}_t - \frac{1}{t} M_t$ gives:
\begin{equation}
    \Lambda_t = \underbrace{\left(\dot{\gamma}(t) - \frac{\gamma(t)}{t}\right)}_{\lambda_c(t)} P_c + \underbrace{\left(\dot{\alpha}(t) - \frac{\alpha(t)}{t}\right)}_{\lambda_s(t)} P_s \equiv \lambda_c(t) P_c + \lambda_s(t) P_s.
\end{equation}
Because $P_c$ and $P_s$ are orthogonal projectors ($P_c P_s = \mathbf{0}$ and $P_k^2 = P_k$), cross-terms vanish under transposition, yielding a block-diagonal squared drift operator:
\begin{equation}
    \Lambda_t^T \Lambda_t = \lambda_c(t)^2 P_c + \lambda_s(t)^2 P_s.
\end{equation}

For the linear decay schedule $\gamma(t) = 1 - t$, differentiation yields $\dot{\gamma}(t) = -1$. Evaluating $\lambda_c(t)$ gives:
\begin{equation}
    \lambda_c(t) = -1 - \frac{1 - t}{t} = -\frac{1}{t} \implies \lambda_c(t)^2 = \frac{1}{t^2}.
\end{equation}

For the truncated schedule $\alpha(t) = (1 - t) e^{-\kappa t} \, \mathbb{I}(t < t_1)$, the derivative exists on $(0, 1) \setminus \{t_1\}$. 
On $t \in (0, t_1)$, applying the product rule yields:
\begin{equation}
    \dot{\alpha}(t) = -e^{-\kappa t} - \kappa (1 - t) e^{-\kappa t} = -e^{-\kappa t} \big( 1 + \kappa(1 - t) \big).
\end{equation}
The instantaneous style coefficient on $t \in (0, t_1)$ evaluates to:
\begin{align}
    \lambda_s(t) &= -e^{-\kappa t} \big( 1 + \kappa(1 - t) \big) - \frac{(1 - t)e^{-\kappa t}}{t} = -\frac{e^{-\kappa t}}{t} \Big[ 1 + \kappa t(1 - t) \Big].
\end{align}
On the upper interval $t \in (t_1, 1)$, $\alpha(t) \equiv 0$ and $\dot{\alpha}(t) \equiv 0$, identically yielding $\lambda_s(t) \equiv 0$. 

At the truncation boundary $t = t_1$, $\alpha(t)$ exhibits a jump discontinuity. Because training time $t$ is drawn from an absolutely continuous density $w(t)$ with respect to the Lebesgue measure, the isolated point $\{t_1\}$ has measure zero ($\mathbb{P}(t = t_1) = 0$). Evaluating $\lambda_s(t)^2$ piecewise almost everywhere (a.e.) on $(0, 1) \setminus \{t_1\}$ yields:
\begin{equation}
    \lambda_s(t)^2 = \frac{e^{-2\kappa t}}{t^2} \Big[ 1 + \kappa t(1 - t) \Big]^2 \cdot \mathbb{I}(t < t_1), \quad \text{a.e. } t \in (0, 1).
    \label{eq:penalty_weights}
\end{equation}
Since $\{t_1\}$ carries zero probability mass, this representation integrates to the exact expected surrogate loss $\mathcal{E}_{\text{repr}}^*(\phi)$ without delta contributions.

\section{Algorithmic Workflows}
\label{app:algorithms}

We summarize the core algorithmic routines below. 
Algorithm~\ref{alg:csfm_training} details the joint training pipeline with asymmetric gradient routing and SIGReg regularization. 
Algorithm~\ref{alg:csfm_target_flow} provides the forward-mode computation of $M_t x_0$ and $\dot{M}_t x_0$ across the cascade. 
Algorithm~\ref{alg:csfm_inference} describes reverse-time generative sampling across conditional and unconditional settings.

\begin{figure*}[t]
\centering
\begin{minipage}[t]{0.485\textwidth}
\begin{algorithm}[H]
\caption{CSFM Training Objective}
\label{alg:csfm_training}
\begin{algorithmic}[1]
\Require Dataset $\mathcal{D}$, Encoder $E_\phi$, Vector field $v_\theta$, Flow engine $\mathcal{F}$, Horizons $(t_0, t_1)$, Weight $\lambda_{\text{reg}}$
\While{not converged}
    \State Sample batch $x_0 \sim \mathcal{D}$ and continuous time $t \sim \mathcal{U}(0, 1)$
    
    \vspace{1.5mm}
    \Statex \textcolor{gray}{\# 1. Target Generation \& Representation}
    \State $x_t, u_t \gets \mathcal{F}(x_0, t, t_1)$ \Comment{See Alg.~\ref{alg:csfm_target_flow}}
    \State $z_{\text{raw}} \gets E_\phi(x_0)$
    
    \vspace{1.5mm}
    \Statex \textcolor{gray}{\# 2. Asymmetric Gradient Routing}
    \State $m \gets \mathbb{I}(t \ge t_0)$ \Comment{Routing mask}
    \State $z_{\text{routed}} \gets m \cdot z_{\text{raw}} + (1 - m) \cdot \operatorname{sg}(z_{\text{raw}})$
    
    \vspace{1.5mm}
    \Statex \textcolor{gray}{\# 3. Vector Field Regression}
    \State $v_{\text{pred}} \gets v_\theta(x_t, t, z_{\text{routed}})$
    \State $\mathcal{L}_{\text{flow}} \gets \| v_{\text{pred}} - u_t \|_2^2$
    
    \vspace{1.5mm}
    \Statex \textcolor{gray}{\# 4. Latent Space Regularization}
    \State $\mathcal{L}_{\text{reg}} \gets \operatorname{SIGReg}(z_{\text{raw}})$
    
    \vspace{1.5mm}
    \Statex \textcolor{gray}{\# 5. Joint Parameter Update}
    \State $\mathcal{L}_{\text{total}} \gets \mathcal{L}_{\text{flow}} + \lambda_{\text{reg}} \mathcal{L}_{\text{reg}}$
    \State Update $\phi, \theta$ via gradient descent on $\nabla \mathcal{L}_{\text{total}}$
\EndWhile
\end{algorithmic}
\end{algorithm}
\end{minipage}\hfill
\begin{minipage}[t]{0.485\textwidth}
\begin{algorithm}[H]
\caption{CSFM Analytical Target Vector Flow}
\label{alg:csfm_target_flow}
\begin{algorithmic}[1]
\Require Sample $x_0$, Time $t$, Horizon $t_1$, Operators $\mathcal{O}$, Decay rates $\boldsymbol{\kappa}$, Commuting flag $\mathtt{commute}$
\Ensure Target state $x_t$, Target velocity $u_t$

\vspace{1.5mm}
\Statex \textcolor{gray}{\# 1. Forward-Mode Evaluation}
\If{$\mathtt{commute}$}
    \State $\hat{x}_0 \gets Q x_0$
    \State $\boldsymbol{\lambda}_t, \dot{\boldsymbol{\lambda}}_t \gets \operatorname{GetSpectralRates}(t, \boldsymbol{\kappa}, t_1)$
    \State $C_t(x_0) \gets Q^T (\boldsymbol{\lambda}_t \odot \hat{x}_0), \quad \dot{C}_t(x_0) \gets Q^T (\dot{\boldsymbol{\lambda}}_t \odot \hat{x}_0)$
\Else
    \State $s \gets x_0, \quad ds \gets \mathbf{0}$
    \For{$k = 1$ \textbf{to} $K$}
        \State $\gamma, \alpha, \dot{\gamma}, \dot{\alpha} \gets \operatorname{GetRates}(t, \boldsymbol{\kappa}[k], t_1)$
        \State $s, ds \gets \mathcal{O}[k].\text{apply}(s, ds, \gamma, \alpha, \dot{\gamma}, \dot{\alpha})$
    \EndFor
    \State $C_t(x_0) \gets s, \quad \dot{C}_t(x_0) \gets ds$
\EndIf

\vspace{1.5mm}
\Statex \textcolor{gray}{\# 2. $M_t$ and $\dot{M}_t$ Calculations}
\State $M_t(x_0) \gets (1 - t) \cdot C_t(x_0)$
\State $\dot{M}_t(x_0) \gets -C_t(x_0) + (1 - t) \cdot \dot{C}_t(x_0)$

\vspace{1.5mm}
\Statex \textcolor{gray}{\# 3. Flow Matching Target Formulation}
\State Sample spatial noise $x_1 \sim \mathcal{N}(0, I_D)$
\State $x_t \gets M_t(x_0) + t \cdot x_1$
\State $u_t \gets \dot{M}_t(x_0) + x_1$

\vspace{1.5mm}
\State \Return $x_t, u_t$
\end{algorithmic}
\end{algorithm}
\end{minipage}
\end{figure*}

\begin{algorithm}[t]
\caption{CSFM Dual-Mode Generative Inference}
\label{alg:csfm_inference}
\begin{algorithmic}[1]
\Require Encoder $E_\phi$, Vector field $v_\theta$, Step size $h$, Steps $N = 1/h$, Optional reference $x_0^*$
\Ensure Synthesized image $\hat{x}_0$

\vspace{1.5mm}
\Statex \textcolor{gray}{\# 1. Semantic Initialization}
\If{$x_0^*$ is provided}
    \State $z^* \gets E_\phi(x_0^*)$ \Comment{Conditional semantics}
\Else
    \State $z^* \sim \mathcal{N}(0, I_c)$ \Comment{Unconditional prior sampling}
\EndIf

\vspace{1.5mm}
\Statex \textcolor{gray}{\# 2. Spatial Initialization}
\State Sample $x_1 \sim \mathcal{N}(0, I_D)$ \Comment{Prior boundary at $t = 1.0$}
\State $\hat{x}_N \gets x_1$
\State $t \gets 1.0$

\vspace{1.5mm}
\Statex \textcolor{gray}{\# 3. Reverse ODE Integration}
\For{$i = N$ \textbf{downto} $1$}
    \State $d_i \gets v_\theta(\hat{x}_i, t, z^*)$ \Comment{Evaluate vector field}
    \State $\hat{x}_{i-1} \gets \textsc{OdeStep}(\hat{x}_i, d_i, t, h)$ \Comment{Reverse step: e.g. $\hat{x}_i - h \cdot d_i$}
    \State $t \gets t - h$
\EndFor

\vspace{1.5mm}
\State \Return $\hat{x}_0$
\end{algorithmic}
\end{algorithm}

\section{Complete Proofs for Section 4}
\label{app:theoretical_proofs}

\subsection{Proof of Lemma~\ref{lem:posterior_covariance} (Closed-Form Posterior Covariance)}
\label{app:proof_lemma_posterior}
\begin{proof}
Under Assumption~\ref{ass:gaussian_surrogate}, the conditional prior is $p(x_0 \mid z) \sim \mathcal{N}(\mu_{x \mid z}, \Sigma_\phi)$ with regularized prior precision $\Sigma_\phi^{-1} = \Omega_\phi + \epsilon_0 I_D \succ \mathbf{0}$. 
The linear forward observation model is:
\begin{equation}
    x_t = M_t x_0 + t x_1, \quad x_1 \sim \mathcal{N}(0, I_D).
\end{equation}
Conditioned on $x_0$, the observation distribution is $\mathcal{N}(M_t x_0, t^2 I_D)$. 
The observation likelihood precision is:
\begin{equation}
    \Omega_{\text{lik}} = M_t^T \left( t^2 I_D \right)^{-1} M_t = \frac{1}{t^2} M_t^T M_t.
\end{equation}
By Gaussian conjugacy, posterior precision is additive:
\begin{equation}
    \Sigma_{\text{post}}(t)^{-1} = \Omega_\phi + \epsilon_0 I_D + \frac{1}{t^2} M_t^T M_t = \frac{1}{t^2} \Big( t^2 \Omega_\phi + t^2 \epsilon_0 I_D + M_t^T M_t \Big).
\end{equation}
Substituting the composite drift $M_t = \gamma(t) P_c + \alpha(t) P_s$:
\begin{align}
    M_t^T M_t &= \big(\gamma(t) P_c + \alpha(t) P_s\big)^T \big(\gamma(t) P_c + \alpha(t) P_s\big) \nonumber \\
    &= \gamma(t)^2 P_c^T P_c + \alpha(t)^2 P_s^T P_s + \gamma(t)\alpha(t) (P_c^T P_s + P_s^T P_c).
\end{align}
Applying the orthogonal projector identities ($P_c^T = P_c$, $P_s^T = P_s$, $P_c^2 = P_c$, $P_s^2 = P_s$, and $P_c P_s = \mathbf{0}$) reduces this form to:
\begin{equation}
    M_t^T M_t = \gamma(t)^2 P_c + \alpha(t)^2 P_s.
\end{equation}
Decomposing the identity matrix as $I_D = P_c + P_s$ yields:
\begin{equation}
    t^2 \epsilon_0 I_D + M_t^T M_t = \big(\gamma(t)^2 + t^2 \epsilon_0\big) P_c + \big(\alpha(t)^2 + t^2 \epsilon_0\big) P_s.
\end{equation}
Inverting $\Sigma_{\text{post}}(t)^{-1} = \frac{1}{t^2}\Big(t^2 \Omega_\phi + \big(\gamma(t)^2 + t^2 \epsilon_0\big) P_c + \big(\alpha(t)^2 + t^2 \epsilon_0\big) P_s\Big)$ yields Eq.~\eqref{eq:sigma_post_closed}.
\end{proof}

\subsection{Proof of Theorem~\ref{thm:disentanglement} (Exact Subspace Disentanglement)}
\label{app:proof_thm_disentanglement}
\begin{proof}
Let $V = [V_c, V_s] \in \mathbb{R}^{D \times D}$ be the orthogonal matrix ($V^T V = I_D$) diagonalizing $P_c$ and $P_s$:
\begin{equation}
    V^T P_c V = \begin{pmatrix} I_{D_c} & \mathbf{0} \\ \mathbf{0} & \mathbf{0} \end{pmatrix}, \quad 
    V^T P_s V = \begin{pmatrix} \mathbf{0} & \mathbf{0} \\ \mathbf{0} & I_{D_s} \end{pmatrix}.
\end{equation}
Because the trace is invariant under cyclic orthogonal similarity transformations, the instantaneous loss is:
\begin{equation}
    \mathcal{E}_{t, \,\text{repr}}^*(\phi) = t^2 \operatorname{Tr} \left( V^T \big(\lambda_c(t)^2 P_c + \lambda_s(t)^2 P_s\big) V \cdot \left[ V^T \Sigma_{\text{post}}(t)^{-1} V \right]^{-1} \right).
\end{equation}
Define $\tilde{S}(t) = V^T \big(t^2 \Omega_\phi + (\gamma(t)^2 + t^2 \epsilon_0) P_c + (\alpha(t)^2 + t^2 \epsilon_0) P_s\big) V$. Expanding in block form:
\begin{equation}
    \tilde{S}(t) = \begin{pmatrix} S_{cc}(t) & t^2 \Omega_{cs} \\ t^2 \Omega_{sc} & S_{ss}(t) \end{pmatrix},
\end{equation}
where $S_{cc}(t) = t^2 \Omega_{cc} + (\gamma(t)^2 + t^2 \epsilon_0) I_{D_c} \succ \mathbf{0}$ and $S_{ss}(t) = t^2 \Omega_{ss} + (\alpha(t)^2 + t^2 \epsilon_0) I_{D_s} \succ \mathbf{0}$. 
The transformed weight matrix is block-diagonal: $\operatorname{diag}(\lambda_c(t)^2 I_{D_c}, \lambda_s(t)^2 I_{D_s})$. Multiplying by $\tilde{S}(t)^{-1}$ gives:
\begin{equation}
    \mathcal{E}_{t, \,\text{repr}}^*(\phi) = t^2 \lambda_c(t)^2 \operatorname{Tr} \left( \left[ \tilde{S}(t)^{-1} \right]_{cc} \right) + t^2 \lambda_s(t)^2 \operatorname{Tr} \left( \left[ \tilde{S}(t)^{-1} \right]_{ss} \right).
\end{equation}
By the block inversion formula, the leading diagonal block of $\tilde{S}(t)^{-1}$ is the inverse of the Schur complement of $S_{ss}(t)$:
\begin{equation}
    \left[ \tilde{S}(t)^{-1} \right]_{cc} = \Big( S_{cc}(t) - t^4 \Omega_{cs} S_{ss}(t)^{-1} \Omega_{sc} \Big)^{-1}.
\end{equation}
Let $\Delta_c(t) \triangleq t^4 \Omega_{cs} S_{ss}(t)^{-1} \Omega_{sc}$. Because $S_{ss}(t) \succ \mathbf{0}$, its inverse is positive definite, which guarantees $\Delta_c(t) \succeq \mathbf{0}$ for any coupling matrix $\Omega_{cs}$. 
Under the Loewner partial order on positive-definite matrices:
\begin{equation}
    S_{cc}(t) - \Delta_c(t) \preceq S_{cc}(t) \implies \big( S_{cc}(t) - \Delta_c(t) \big)^{-1} \succeq S_{cc}(t)^{-1}.
\end{equation}
Applying the trace operator yields:
\begin{equation}
    \operatorname{Tr} \left( \big( S_{cc}(t) - \Delta_c(t) \big)^{-1} \right) \ge \operatorname{Tr} \left( S_{cc}(t)^{-1} \right),
\end{equation}
with equality holding if and only if $\Delta_c(t) = \mathbf{0}$. Because $S_{ss}(t)^{-1}$ is full rank, $\Delta_c(t) = \mathbf{0}$ implies $\Omega_{cs} = \mathbf{0}$.

Applying the same reasoning to the trailing diagonal block gives:
\begin{equation}
    \left[ \tilde{S}(t)^{-1} \right]_{ss} = \Big( S_{ss}(t) - t^4 \Omega_{sc} S_{cc}(t)^{-1} \Omega_{cs} \Big)^{-1} \succeq S_{ss}(t)^{-1},
\end{equation}
with equality if and only if $\Omega_{sc} = \Omega_{cs}^T = \mathbf{0}$. 
Because both trace terms are minimized when cross-coupling vanishes, the global optimizer satisfies $\Omega_{cs} = \mathbf{0}$, which proves $[\Omega_\phi^*, P_c] = \mathbf{0}$.
\end{proof}

\subsection{Proof of Lemma~\ref{lem:saturation} (Capacity Saturation)}
\label{app:proof_lemma_saturation}
\begin{proof}
By Theorem~\ref{thm:disentanglement}, $\Omega_\phi^*$ is block-diagonal in basis $V$. It decomposes spectrally as $\Omega_\phi^* = \sum_{j=1}^D \omega_j v_j v_j^T$, where each unit eigenvector $v_j$ lies strictly in $\mathcal{X}_c$ ($\delta_j = 1$) or strictly in $\mathcal{X}_s$ ($\delta_j = 0$). 
The encoder loss separates into a sum over scalar eigenvalues:
\begin{equation}
    \mathcal{E}_{\text{repr}}^*(\phi) = \sum_{j=1}^D \int_0^1 w(t) t^2 \left( \frac{\lambda_c(t)^2 \delta_j}{t^2 \omega_j + \gamma(t)^2 + t^2 \epsilon_0} + \frac{\lambda_s(t)^2 (1 - \delta_j)}{t^2 \omega_j + \alpha(t)^2 + t^2 \epsilon_0} \right) dt.
\end{equation}
Differentiating with respect to an arbitrary eigenvalue $\omega_j$:
\begin{equation}
    \frac{\partial \mathcal{E}_{\text{repr}}^*}{\partial \omega_j} = - \int_0^1 w(t) t^4 \left( \frac{\lambda_c(t)^2 \delta_j}{\big(t^2 \omega_j + \gamma(t)^2 + t^2 \epsilon_0\big)^2} + \frac{\lambda_s(t)^2 (1 - \delta_j)}{\big(t^2 \omega_j + \alpha(t)^2 + t^2 \epsilon_0\big)^2} \right) dt.
\end{equation}
For any $t \in (0, 1)$, $w(t) > 0$ and $t^4 > 0$. The integrand is strictly positive for all $\omega_j \ge 0$, establishing that:
\begin{equation}
    \frac{\partial \mathcal{E}_{\text{repr}}^*}{\partial \omega_j} < 0, \quad \forall \omega_j \in [0, \omega_{\max}].
\end{equation}
Because the objective is strictly monotonically decreasing, minimizing $\mathcal{E}_{\text{repr}}^*$ under the capacity constraints $\sum_{j=1}^D \mathbb{I}(\omega_j > 0) \le c$ and $\omega_j \in [0, \omega_{\max}]$ requires placing all capacity on the upper boundary: $c$ eigenvalues attain $\omega_j^* = \omega_{\max}$, while the remaining $D - c$ eigenvalues equal zero.
\end{proof}

\subsection{Proof of Theorem~\ref{thm:truncation} (Subspace Capacity Allocation via Truncation)}
\label{app:proof_thm_truncation}
\begin{proof}
By Lemma~\ref{lem:saturation}, each active dimension reduces variance by $\Delta C = C(0) - C(\omega_{\max})$. For content ($\delta_j = 1$), this reduction is:
\begin{align}
    \Delta C_c &= \int_0^1 w(t) t^2 \lambda_c(t)^2 \left( \frac{1}{\gamma(t)^2 + t^2 \epsilon_0} - \frac{1}{t^2 \omega_{\max} + \gamma(t)^2 + t^2 \epsilon_0} \right) dt \nonumber \\
    &= \int_0^1 w(t) \frac{\lambda_c(t)^2 t^4 \omega_{\max}}{[\gamma(t)^2 + t^2 \epsilon_0] [t^2 (\omega_{\max} + \epsilon_0) + \gamma(t)^2]} dt.
\end{align}
For style ($\delta_j = 0$), the reduction evaluates to:
\begin{equation}
    \Delta C_s = \int_0^1 w(t) \frac{\lambda_s(t)^2 t^4 \omega_{\max}}{[\alpha(t)^2 + t^2 \epsilon_0] [t^2 (\omega_{\max} + \epsilon_0) + \alpha(t)^2]} dt.
\end{equation}

\noindent\textbf{Part 1: Asymptotic Style Domination under Untruncated Decay ($t_1 = 1$).} 
Substitute $\gamma(t) = 1 - t$ and $\lambda_c(t)^2 = 1/t^2$:
\begin{equation}
    \Delta C_c = \int_0^1 w(t) \frac{t^2 \omega_{\max}}{[(1 - t)^2 + t^2 \epsilon_0] [t^2 (\omega_{\max} + \epsilon_0) + (1 - t)^2]} dt.
\end{equation}
For $t_1 = 1$, the style schedule is $\alpha(t) = (1 - t) e^{-\kappa t}$, and $\lambda_s(t)^2 = \frac{e^{-2\kappa t}}{t^2}[1 + \kappa t(1-t)]^2$. 
The style variance reduction becomes:
\begin{equation}
    \Delta C_s = \int_0^1 w(t) \frac{[1 + \kappa t(1-t)]^2 t^2 \omega_{\max} e^{-2\kappa t}}{[(1 - t)^2 e^{-2\kappa t} + t^2 \epsilon_0] [t^2 (\omega_{\max} + \epsilon_0) + (1 - t)^2 e^{-2\kappa t}]} dt.
\end{equation}
Dividing numerator and the first bracketed denominator term by $e^{-2\kappa t}$:
\begin{equation}
    \Delta C_s = \int_0^1 w(t) \frac{[1 + \kappa t(1-t)]^2 t^2 \omega_{\max}}{[(1 - t)^2 + t^2 \epsilon_0 e^{2\kappa t}] [t^2 (\omega_{\max} + \epsilon_0) + (1 - t)^2 e^{-2\kappa t}]} dt.
\end{equation}
For small regularizations $\epsilon_0 \ll 1$, comparing the integrands pointwise for any $t \in (0, 1)$:
\begin{enumerate}[label=(\roman*),leftmargin=*]
    \item {Numerator:} For any $\kappa > 0$ and $t \in (0, 1)$, $[1 + \kappa t(1-t)]^2 > 1$.
    \item {Denominator:} Because $e^{-2\kappa t} < 1$, the product of the bracketed denominator terms is strictly smaller for the style integrand than for the content integrand.
\end{enumerate}
Because the style integrand dominates the content integrand pointwise across $t \in (0, 1)$, $\Delta C_s > \Delta C_c$. 
In the linear program $\max_{\{\delta_j\}} \sum_{j=1}^c [\delta_j (\Delta C_c - \Delta C_s) + \Delta C_s]$, the coefficient $(\Delta C_c - \Delta C_s)$ is negative, forcing $\delta_j^* = 0$ for all $j \in \{1, \dots, c\}$. 
The untruncated encoder therefore dedicates its bottleneck capacity to the style subspace $\mathcal{X}_s$.

\medskip
\noindent\textbf{Part 2: Content Alignment via Style Truncation ($t_1 < 1$).}
When $t_1 < 1$, the style operator vanishes on $[t_1, 1]$ ($\alpha(t) \equiv 0$ and $\lambda_s(t) \equiv 0$). The style integral truncates to:
\begin{equation}
    \Delta C_s = \int_0^{t_1} w(t) \frac{[1 + \kappa t(1-t)]^2 t^2 \omega_{\max}}{[(1 - t)^2 + t^2 \epsilon_0 e^{2\kappa t}] [t^2 (\omega_{\max} + \epsilon_0) + (1 - t)^2 e^{-2\kappa t}]} dt.
\end{equation}
Because the integration interval $[0, t_1]$ is bounded strictly away from $t = 1$, the denominator satisfies $(1 - t)^2 \ge (1 - t_1)^2 > 0$. 
The integrand is continuous on the compact domain $[0, t_1]$. By the extreme value theorem, there exists a constant $M < \infty$ bounding the integrand, which implies:
\begin{equation}
    \Delta C_s \le M \cdot t_1 < \infty.
\end{equation}
Conversely, content is integrated across the full unit interval $[0, 1]$. Near the boundary $t \to 1$:
\begin{equation}
    \lim_{t \to 1} \frac{t^2 \omega_{\max}}{t^2 (\omega_{\max} + \epsilon_0) + (1 - t)^2} = \frac{\omega_{\max}}{\omega_{\max} + \epsilon_0} > 0.
\end{equation}
Setting $u = 1 - t$, the singular contribution near $t = 1$ is governed by:
\begin{equation}
    \int_0^\delta \frac{du}{u^2 + \epsilon_0} = \frac{1}{\sqrt{\epsilon_0}} \arctan\left(\frac{\delta}{\sqrt{\epsilon_0}}\right) \xrightarrow{\epsilon_0 \to 0^+} \frac{\pi}{2\sqrt{\epsilon_0}} = \mathcal{O}\left(\frac{1}{\sqrt{\epsilon_0}}\right),
\end{equation}
for any small neighborhood $\delta > 0$. 
While $\Delta C_s \le M \cdot t_1 < \infty$ remains uniformly bounded due to truncation at $t_1 < 1$, the content reduction diverges as $\Delta C_c \sim \mathcal{O}(1/\sqrt{\epsilon_0}) \to \infty$ as $\epsilon_0 \to 0^+$ (or $\mathcal{O}(1/\delta_{\text{clamp}})$ under boundary clamping $t \le 1 - \delta_{\text{clamp}}$). 
Consequently, $\Delta C_c \gg \Delta C_s$, yielding $\Delta C_c - \Delta C_s > 0$. The linear code attains its maximum at $\delta_j^* = 1$ for all $j \in \{1, \dots, c\}$, compelling the optimal encoder to allocate its bottleneck capacity exclusively to the content subspace $\mathcal{X}_c$.
\end{proof}

\subsection{Hyperparameter Details and Training Configurations}
\label{app:hyperparameters}

Table~\ref{tab:csfm_params} details the default curriculum schedules, operator-specific exponential decay rates, and optimization parameters used across our experiments.

\begin{table}[htpb]
  \centering
  \caption{CSFM degradation curriculum and optimization hyperparameters.}
  \label{tab:csfm_params}
  \renewcommand{\arraystretch}{1.1}
  \begin{tabular}{@{}lc@{}}
    \toprule
    \textbf{Parameter} & \textbf{Value} \\
    \midrule
    \multicolumn{2}{@{}l}{\textit{Operator Decay Rates}} \\
    \quad Illumination ($k_{\text{illum}}$) & $3.250$ \\ 
    \quad Color ($k_{\text{color}}$) & $2.925$ \\ 
    \quad Frequency ($k_{\text{freq}}$) & $2.632$ \\
    \midrule 
    \multicolumn{2}{@{}l}{\textit{Curriculum Horizons and Projectors}} \\
    \quad Style Truncation Boundary ($t_1$) & $0.50$ \\
    \quad Routing Cutoff ($t_0$) & $0.05$ \\
    \quad Radial Gaussian Cutoff ($r_{\text{low}}$) & $6$ \\
    \midrule
    \multicolumn{2}{@{}l}{\textit{Optimization}} \\
    \quad Optimizer & Adam (Unet) / AdamW (DiT/ViT) \\
    \quad Learning Rate ($\eta$) & $2 \times 10^{-4}$ \\
    \quad Regularization Weight ($\lambda_{\text{reg}}$) & $10^{-3}$ \\
    \quad Mixed Precision & \texttt{bfloat16} \\
    \bottomrule
  \end{tabular}
\end{table}

\section{Baseline Objectives and Discriminative Setup}
\label{app:baseline_details}

All baseline models optimize a composite objective balancing their task-specific loss with the structural regularizer:
\begin{equation}
    \mathcal{L}_{\text{total}} = \mathcal{L}_{\text{task}} + \lambda_{\text{reg}} \mathcal{L}_{\text{SIGReg}}(z).
\end{equation}
Table~\ref{tab:baseline_objectives} defines the task objectives $\mathcal{L}_{\text{task}}$ across the four baseline paradigms. 
Generative and autoencoding baselines use $\lambda_{\text{reg}} = 10^{-2}$. 
For AE, spatial skip connections are zeroed out to force reconstruction solely through the compressed latent bottleneck $z = E_\phi(x_0)$. 
For CDAE, intermediate noisy states $x_t$ are drawn from the continuous interpolation path, with the network trained to predict $x_0$ in a single step. 
OT-Flow implements standard linear flow matching with $x_t = (1-t)x_0 + t x_1$ and constant target velocity $u_t = x_1 - x_0$.

\begin{table}[htpb]
\centering
\caption{Primary task objectives ($\mathcal{L}_{\text{task}}$) for comparative baselines. We denote $x_1 \sim \mathcal{N}(0, I_D)$ as the spatial prior and $v_1, v_2$ as independent stochastic views.}
\label{tab:baseline_objectives}
\renewcommand{\arraystretch}{1.3}
\begin{tabular}{@{}llc@{}}
\toprule
\textbf{Model} & \textbf{Learning Paradigm} & \textbf{Primary Task Objective ($\mathcal{L}_{\text{task}}$)} \\
\midrule
\textbf{AE} & Unconstrained Reconstruction & $\mathbb{E}_{x_0} \left[ \big\| x_0 - v_\theta(\mathbf{0}, 0, z) \big\|_2^2 \right]$ \\
\textbf{CDAE} & Single-Step Denoising & $\mathbb{E}_{t, x_0, x_1} \left[ \big\| x_0 - \hat{x}_\theta(x_t, t, z) \big\|_2^2 \right]$ \\
\textbf{OT-Flow} & Continuous Linear Flow & $\mathbb{E}_{t, x_0, x_1} \left[ \big\| v_\theta(x_t, t, z) - (x_1 - x_0) \big\|_2^2 \right]$ \\
\textbf{LeJEPA} &  Joint Embedding & $\mathbb{E}_{x_0, v_1, v_2} \left[ \frac{1}{2} \big\| z_{v_1} - z_{v_2} \big\|_2^2 \right]$ \\
\bottomrule
\end{tabular}
\end{table}

\paragraph{LeJEPA Projector Architecture.}
To avoid dimensional collapse, the LeJEPA baseline appends a 3-layer MLP projector to the WideResNet backbone following VICReg \cite{bardes2022vicregvarianceinvariancecovarianceregularizationselfsupervised}. 
The projector expands the backbone feature dimension ($d=512$) to a 2048-dimensional hidden layer before bottlenecking to the final representation $z \in \mathbb{R}^{256}$. 
The first two linear layers are followed by 1D batch normalization and ReLU activations to bound variance, while the final layer is linear.

\paragraph{Dual-View Data Augmentations.}
LeJEPA optimizes invariance over asymmetric view pairs $(v_1, v_2)$ constructed according to BYOL \cite{grill2020bootstraplatentnewapproach} and DINO \cite{caron2021emergingpropertiesselfsupervisedvision} specifications adapted for CIFAR-10 ($32 \times 32$):
\begin{itemize}[leftmargin=*]
    \item \textbf{Common Transformations:} Both views share random resized cropping (scale $[0.2, 1.0]$, bicubic interpolation), random horizontal flipping ($p=0.5$), color jittering ($p=0.8$, with brightness, contrast, and saturation factors of $0.4$, and hue factor of $0.1$), and random grayscale conversion ($p=0.2$).
    \item \textbf{Asymmetric Blurring:} View 1 applies Gaussian blur with probability $p=0.999$, using a $3 \times 3$ kernel ($\approx 10\%$ of spatial resolution) and $\sigma \in [0.1, 2.0]$. View 2 applies the identical blur kernel with probability $p=0.1$.
\end{itemize}

\paragraph{LeJEPA Regularization Strength ($\lambda_{\text{reg}}$) Ablation.}
Because batch normalization explicitly bounds embedding variance, the balance between invariance alignment and the SIGReg penalty is sensitive. 
Table~\ref{tab:jea_ablation} benchmarks the effect of varying $\lambda_{\text{reg}}$ for LeJEPA. 
Setting $\lambda_{\text{reg}} \le 1 \times 10^{-3}$ yields insufficient repulsive force, causing collapse and low linear accuracy ($18.62\%$). 
Conversely, scaling $\lambda_{\text{reg}} > 3 \times 10^{-2}$ prioritizes uniform feature spreading over semantic view clustering. 
The optimum occurs at $\lambda_{\text{reg}} = 3 \times 10^{-2}$, which is adopted as the default across all LeJEPA benchmarks.

\begin{table}[htpb]
    \centering
    \caption{Ablation of SIGReg regularization strength ($\lambda_{\text{reg}}$) on the LeJEPA baseline.}
    \label{tab:jea_ablation}
    \renewcommand{\arraystretch}{1.15}
    \begin{tabular}{@{}lccccc@{}}
        \toprule
        \multirow{2}{*}{\textbf{Regularization ($\lambda_{\text{reg}}$)}} & \multicolumn{3}{c}{\textbf{Linear Probing Accuracy}} & \multirow{2}{*}{\textbf{$k$-NN ($k=20$)}} & \multirow{2}{*}{\textbf{Silhouette Score}} \\
        \cmidrule(lr){2-4}
        & \textbf{100\% Data} & \textbf{10\% Data} & \textbf{1\% Data} & & \\
        \midrule
        $1 \times 10^{-3}$ & 18.62\% & 18.12\% & 17.85\% & 31.13\% & -0.1763 \\
        $5 \times 10^{-3}$ & 65.18\% & 59.81\% & 54.57\% & 64.41\% & -0.0535 \\
        $7.5 \times 10^{-3}$ & 71.31\% & 68.28\% & 64.21\% & 70.31\% & 0.1019 \\
        $1 \times 10^{-2}$ & \textbf{72.97\%} & 70.26\% & 66.69\% & 71.64\% & 0.1604 \\
        $3 \times 10^{-2}$ & 72.49\% & \textbf{71.02\%} & \textbf{67.76\%} & \textbf{71.65\%} & \textbf{0.1738} \\
        $5 \times 10^{-2}$ & 71.15\% & 69.79\% & 66.62\% & 70.41\% & 0.1372 \\
        \bottomrule
    \end{tabular}
\end{table}

\section{Extended CIFAR-10 Empirical Results and Generative Validation}
\label{app:extended_experiments}

This section provides extended empirical evaluations on CIFAR-10, detailing generative synthesis metrics, numerical ODE trajectory curvature, fine-grained ablations on physical decay parameters and latent regularization weights, and manifold continuity analysis.

\subsection{Latent Space Structural Geometry: SVD Spectrum and Feature Covariance}
\label{app:latent_structure}

To evaluate how different learning objectives shape the latent bottleneck, we analyze the singular value decomposition (SVD) spectrum, empirical Pearson feature correlations, and topological clustering on frozen CIFAR-10 test embeddings ($z = E_\phi(x_0)$) across OT-Flow, truncated CSFM ($t_1=0.5$), and LeJEPA.

\paragraph{Capacity Utilization and Effective Dimensionality.}
Figure~\ref{fig:svd_analysis} reports the normalized explained variance spectrum and cumulative energy thresholds across baselines. 
We quantify active capacity via the effective dimensionality $d_{90\%}$, the minimum number of principal components capturing 90\% of embedding variance. 
LeJEPA exhibits severe dimensional collapse ($d_{90\%} = 9$), collapsing features onto the minimal simplex required to discriminate the 10 training classes \cite{jing2022understandingdimensionalcollapsecontrastive}. 
While standard OT-Flow mitigates collapse, its spectrum saturates at ($d_{90\%} = 31$). 
In contrast, truncated CSFM distributes semantic features across a wider spectrum ($d_{90\%} = 37$), preserving the high-rank representation capacity required for reverse-ODE synthesis without sacrificing class separability.

\begin{figure}[htpb]
    \centering
    \includegraphics[width=0.85\linewidth]{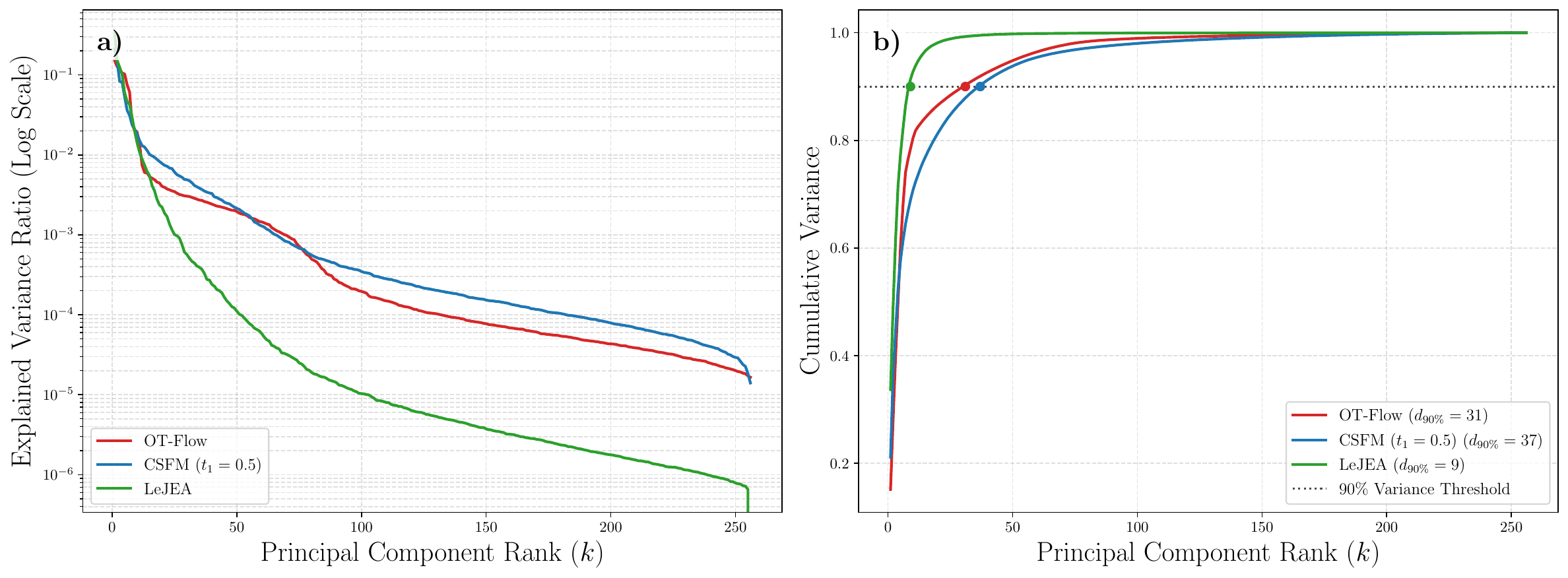}
    \caption{Dimensional collapse analysis via singular value decomposition (SVD) on frozen CIFAR-10 test embeddings. (\textbf{a}) Normalized singular value spectrum. (\textbf{b}) Cumulative explained variance highlighting the 90\% energy threshold ($d_{90\%}$).}
    \label{fig:svd_analysis}
\end{figure}

\paragraph{Feature Cross-Covariance Structure.}
Figure~\ref{fig:correlation_analysis} visualizes the pairwise Pearson correlation matrix across all latent dimensions. 
Under standard OT-Flow, off-diagonal entries are uniformly near zero, indicating representational uncoupling and under-utilized capacity \cite{lucas2019dontblameelbolinear}. 
LeJEPA exhibits dense, block-structured correlations induced by augmentations forcing co-occurring semantic features together. 
CSFM ($t_1=0.5$) interpolates between these two regimes: it maintains mild structured off-diagonal couplings that reflect shared macroscopic attributes.
\begin{figure}[htpb]
    \centering
    \includegraphics[width=\linewidth]{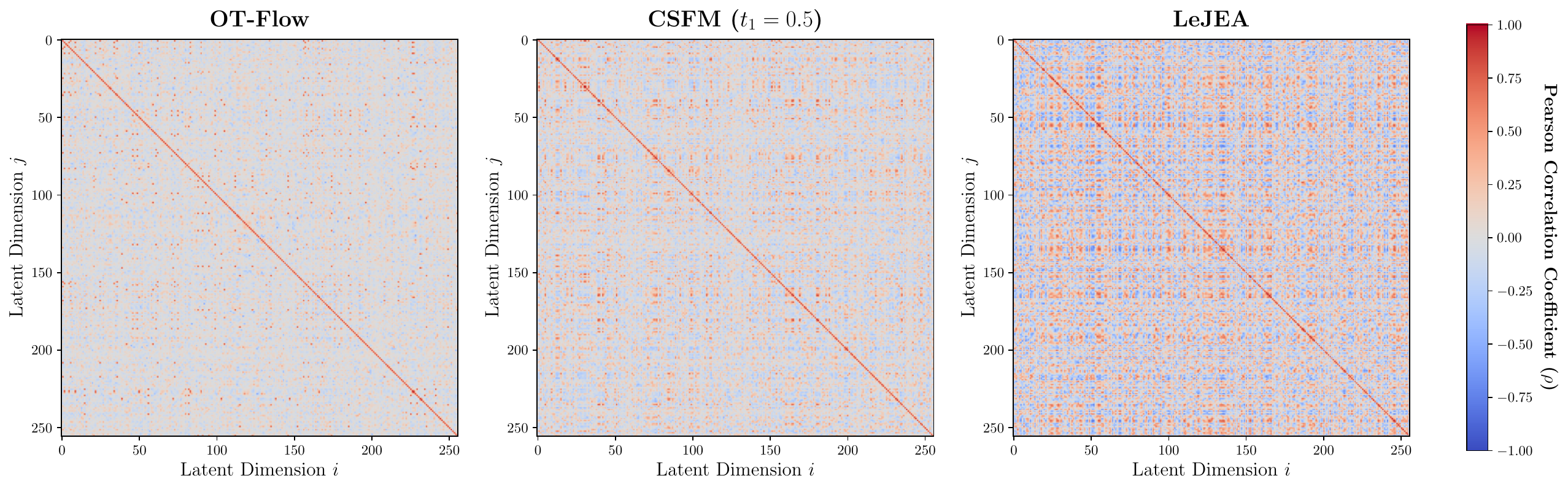}
    \caption{Empirical Pearson correlation matrices of the latent bottleneck dimensions on CIFAR-10 test embeddings across OT-Flow (\textbf{Left}), truncated CSFM ($t_1=0.5$, \textbf{Middle}), and LeJEPA (\textbf{Right}).}
    \label{fig:correlation_analysis}
\end{figure}

\paragraph{Manifold Connectivity and Topological Dead Zones.}
As illustrated by the $t$-SNE projections in Figure~\ref{fig:tsne_analysis}, contrastive margin maximization shatters the LeJEPA embedding space into isolated class clusters separated by low-density \emph{dead zones} \cite{wang2022understandingcontrastiverepresentationlearning}. 
While advantageous for $k$-NN and few-shot classification, reverse-ODE paths passing through these empty regions yield severe out-of-distribution synthesis artifacts \cite{white2016samplinggenerativenetworks}. 
Conversely, unguided OT-Flow maintains dense continuity but suffers from diffuse, overlapping boundaries. 
Truncated CSFM preserves global manifold connectivity, enabling both reliable linear readout and stable bidirectional transport.

\subsection{Sensitivity to Latent Regularization ($\lambda_{\text{reg}}$)}
\label{app:ablation_sigreg}

Table~\ref{tab:app_ablation_reg} reports representation metrics across regularization weights $\lambda_{\text{reg}}$ on the 20.81M parameter architecture (40K optimization steps). 
Omitting the regularizer ($\lambda_{\text{reg}} = 0$) preserves high downstream linear accuracy (68.98\%), confirming that CSFM avoids representational collapse on its own. 
The optimal trade-off occurs at $\lambda_{\text{reg}} = 10^{-3}$, which peaks at 69.09\% linear accuracy and 46.41\% 1\% few-shot accuracy, while excessive regularization ($\lambda_{\text{reg}} \ge 0.1$) degrades downstream separability.

\begin{table}[htpb]
\centering
\caption{Ablation over structural regularization weight $\lambda_{\text{reg}}$ on the 20.81M baseline tier (40K steps). CSFM evaluated with $\kappa = 6.0, \beta = 0.7, t_0 = 0.0, t_1 = 1.0$.}
\label{tab:app_ablation_reg}
\renewcommand{\arraystretch}{1.1}
\begin{tabular}{@{}llcccc@{}}
\toprule
\textbf{Model} & \textbf{$\lambda_{\text{reg}}$} & \textbf{Linear Probe} & \textbf{1\% Few-Shot} & \textbf{$k$-NN ($k=20$)} & \textbf{Silhouette} \\
\midrule
\multirow{5}{*}{\textbf{CSFM}} 
& $0.0$ (Unregularized) & 68.98\% & 45.50\% & 64.08\% & -0.0077 \\
& $\mathbf{1 \times 10^{-3}}$ & \textbf{69.09\%} & \textbf{46.41\%} & 63.13\% & -0.0185 \\
& $5 \times 10^{-3}$ & 68.09\% & 46.40\% & 53.85\% & -0.0485 \\
& $1 \times 10^{-2}$ & 68.44\% & 46.31\% & 53.20\% & -0.0221 \\
& $1 \times 10^{-1}$ & 64.61\% & 44.96\% & 55.73\% & \textbf{0.0001} \\
\midrule
\multirow{4}{*}{OT-Flow} 
& $0.0$ (Unregularized) & 66.40\% & 43.61\% & 60.40\% & -0.0205 \\
& $1 \times 10^{-3}$ & 66.93\% & 42.72\% & \textbf{61.05\%} & -0.0190 \\
& $\mathbf{1 \times 10^{-2}}$ & \textbf{67.11\%} & \textbf{45.75\%} & 52.97\% & -0.0454 \\
& $1 \times 10^{-1}$ & 65.07\% & 45.24\% & 56.35\% & \textbf{-0.0014} \\
\bottomrule
\end{tabular}
\end{table}

\subsection{Physical Degradation Parameter Sensitivity}
\label{app:physics_ablation}

Table~\ref{tab:app_ablation_physics} summarizes sensitivity sweeps over the base style decay rate $\kappa$ and operator decay factor $\beta$, where per-subspace rates follow the geometric schedule $\kappa_i = \kappa \beta^i$ indexed across the physical degradation hierarchy ($i \in \{0, 1, 2\}$ for illumination, color, and frequency). 
Under optimal truncation ($t_1=0.5$), downstream performance remains robust across five orders of magnitude ($\kappa = 10^{-4}$ to $6.0$), with optimal linear probing reaching 76.10\% ($\kappa = 3.25, \beta = 0.7$). Performance decreases only at extreme decay stiffness ($\kappa \ge 8.75$). 
Conversely, without truncation ($t_1=1.0$), varying $\kappa$ and $\beta$ yields flat downstream performance bounded below 69.61\%, confirming that continuous decay schedules cannot recover content features without boundary truncation.

\begin{table}[htpb]
\centering
\caption{Ablation of continuous trajectory decay parameters ($\kappa$ and $\beta$) on CIFAR-10 (20.81M tier, 40K steps).}
\label{tab:app_ablation_physics}
\renewcommand{\arraystretch}{1.15}
\begin{tabular}{@{}lcccc@{}}
\toprule
\textbf{Hyperparameter} & \textbf{Linear Probe} & \textbf{1\% Few-Shot} & \textbf{$k$-NN} & \textbf{Silhouette} \\
\midrule
\multicolumn{5}{c}{\textbf{Optimal Truncation ($t_1 = 0.5$)}} \\
\midrule
\multicolumn{5}{l}{\textit{Base Style Decay Rate ($\kappa$) with $\beta = 0.7$}} \\
$\kappa = 10^{-4}$ & 75.53\% & \textbf{60.12\%} & 66.33\% & \textbf{0.0062} \\
$\kappa = 0.01$ & 74.80\% & 58.46\% & 64.47\% & 0.0011 \\
$\kappa = 0.5$ & 75.25\% & 58.72\% & 65.60\% & -0.0047 \\
\textbf{$\kappa = 3.25$} & \textbf{76.10\%} & 58.57\% & 67.27\% & -0.0169 \\
$\kappa = 6.0$ & 75.12\% & 57.40\% & \textbf{68.35\%} & -0.0215 \\
$\kappa = 8.75$ & 72.43\% & 54.64\% & 65.80\% & -0.0279 \\
$\kappa = 11.5$ & 70.26\% & 51.56\% & 63.09\% & -0.0300 \\
\addlinespace
\multicolumn{5}{l}{\textit{Operator Decay Scaling Factor ($\beta$) with $\kappa = 3.25$}} \\
$\beta = 0.2$ & 72.37\% & 55.57\% & 62.82\% & -0.0214 \\
\textbf{$\beta = 0.7$} & \textbf{76.10\%} & 58.57\% & 67.27\% & -0.0169 \\
$\beta = 0.9$ & 75.29\% & \textbf{58.93\%} & \textbf{68.34\%} & \textbf{-0.0154} \\
\midrule
\multicolumn{5}{c}{\textbf{No Truncation ($t_1 = 1.0$)}} \\
\midrule
\multicolumn{5}{l}{\textit{Base Style Decay Rate ($\kappa$) with $\beta = 0.7$}} \\
$\kappa = 10^{-4}$ & 67.43\% & 42.59\% & 60.77\% & -0.0220 \\
$\kappa = 0.01$ & 67.52\% & 42.71\% & 61.41\% & -0.0211 \\
$\kappa = 0.5$ & 67.93\% & 43.14\% & 61.04\% & -0.0176 \\
$\kappa = 3.25$ & 69.59\% & 46.31\% & 64.14\% & -0.0098 \\
$\kappa = 6.0$ & 69.52\% & 47.41\% & \textbf{64.38\%} & -0.0105 \\
$\kappa = 8.75$ & 69.30\% & 47.81\% & 63.86\% & -0.0209 \\
\textbf{$\kappa = 11.5$} & \textbf{69.60\%} & \textbf{49.60\%} & 62.81\% & -0.0280 \\
\addlinespace
\multicolumn{5}{l}{\textit{Operator Decay Scaling Factor ($\beta$) with $\kappa = 3.25$}} \\
$\beta = 0.2$ & 67.26\% & 43.05\% & 61.05\% & -0.0203 \\
$\beta = 0.7$ & 69.59\% & \textbf{46.31\%} & 64.14\% & -0.0098 \\
\textbf{$\beta = 0.9$} & \textbf{69.61\%} & 46.28\% & \textbf{64.16\%} & \textbf{-0.0127} \\
\bottomrule
\end{tabular}
\end{table}

\subsection{Generative Synthesis Fidelity and ODE Trajectory Curvature}
\label{app:generative_eval}

We evaluate the generative capabilities of the converged vector field regressors ($v_\theta$, 59.10M parameter tier at 120K steps) in Table~\ref{tab:app_fid}. 
We report relative Fr\'echet inception distance ($\text{FID}_{1\text{k}}$) over 1000 synthesized samples against 1000 real test images using the Heun second-order solver (200 function evaluations) for both unconditional sampling ($z \sim \mathcal{N}(0, I_c)$) and conditional reconstruction ($z = E_\phi(x_0)$). 
Additionally, we evaluate conditional ODE truncation error (mean squared error against a ground-truth $N=1000$ Euler baseline) for budget-constrained step counts $N \in \{10, 50\}$.

\begin{table}[htpb]
\centering
\caption{Generative synthesis fidelity ($\text{FID}_{1\text{k}}$) and conditional ODE integration error (MSE vs.\ $N=1000$ baseline) at 120K steps on the high-capacity tier (65.08M total). Best results in {bold}.}
\label{tab:app_fid}
\renewcommand{\arraystretch}{1.12}
\begin{tabular}{@{}lcccc@{}}
\toprule
\textbf{Model Configuration} & \textbf{Uncond. $\text{FID}_{1\text{k}}$ ($\downarrow$)} & \textbf{Cond. $\text{FID}_{1\text{k}}$ ($\downarrow$)} & \textbf{ODE Error ($N=50$)} & \textbf{ODE Error ($N=10$)} \\
\midrule
OT-Flow & \textbf{93.57} & 119.04 & \textbf{0.00015} & \textbf{0.00327} \\
CSFM ($t_1=1.0$) & 125.97 & \textbf{48.84} & 0.00505 & 0.08340 \\
\textbf{CSFM ($t_1=0.5$)} & 95.76 & 87.88 & 0.00277 & 0.04393 \\
\bottomrule
\end{tabular}
\end{table}

\paragraph{Analysis of Generative Trade-offs.}
Because standard OT-Flow optimizes straight paths without a content bottleneck, it achieves the lowest unconditional prior synthesis ($\text{FID}_{1\text{k}}$ 93.57) and minimal numerical integration error. 
However, it fails at conditional reconstruction ($\text{FID}_{1\text{k}}$ 119.04), as unguided flow paths cannot resolve the ambiguity of conditioning codes. 
Conversely, un-truncated CSFM ($t_1=1.0$) memorizes fine textures, minimizing conditional FID (48.84) but inducing severe trajectory stiffness that degrades unconditional prior sampling ($\text{FID}_{1\text{k}}$ 125.97) and inflates numerical error ($0.08340$ at $N=10$). 
Truncating style decay at $t_1=0.5$ addresses this trade-off: it nearly halves numerical integration error ($0.04393$) with conditional (87.88) and unconditional (95.76) synthesis.

\subsection{Latent Manifold Smoothness, Interpolation, and Extrapolation}
\label{app:latent_slerp}

Because the latent bottleneck operates in dimension $b = 128$, the Gaussian prior regularized by SIGReg concentrates mass on a spherical shell of radius $\sqrt{b}$ \cite{vershynin2018high}. 
Standard Euclidean linear interpolation traverses through the unpopulated interior of this shell, inducing low-density artifacts during reverse ODE integration \cite{white2016samplinggenerativenetworks}. 
We therefore evaluate manifold continuity via spherical linear interpolation (Slerp) \cite{white2016samplinggenerativenetworks}, setting $\lambda \in [0, 1]$ for interpolation between test-set anchor pairs $z_A, z_B = E_\phi(x_0)$ and $\lambda \notin [0, 1]$ for extrapolation.

Figure~\ref{fig:slerp_extrapolation_app} compares traversal trajectories across OT-Flow, un-truncated CSFM ($t_1 = 1.0$), and truncated CSFM ($t_1 = 0.5$):
\begin{itemize}[leftmargin=*]
    \item \textbf{Interpolation ($\lambda \in [0, 1]$):} Standard OT-Flow exhibits abrupt semantic jumps due to early variance saturation. Un-truncated CSFM ($t_1=1.0$) preserves localized high-frequency textures but produces discontinuous structural morphing. In contrast, truncated CSFM ($t_1=0.5$) yields continuous semantic transitions, smoothly interpolating class geometry and color distributions.
    \item \textbf{Extrapolation ($\lambda \notin [0, 1]$):} Beyond the anchor coordinates, truncated CSFM maintains coherent global structures without severe distortion, confirming that the vector field learns stable, persistent semantic directions.
\end{itemize}

\begin{figure*}[htpb]
    \centering
    \begin{subfigure}[b]{0.31\textwidth}
        \centering
        \begin{tikzpicture}
            \node[anchor=south west, inner sep=0] (img) at (0,0) {
                \includegraphics[width=\linewidth]{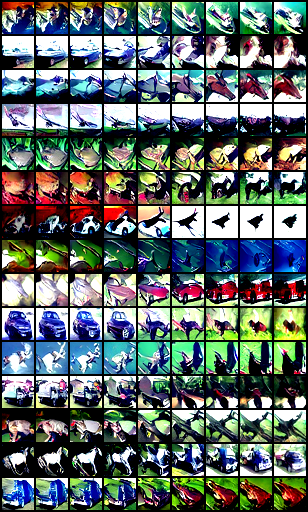}
            };
            \begin{scope}[x={(img.south east)}, y={(img.north west)}]
                \def\ybar{1.06}
                \draw[|-|, thick, red] (0, \ybar) -- (2/9, \ybar) node[midway, above, font=\tiny] {Extrap.};
                \draw[|-|, thick, green!50!black] (2/9, \ybar) -- (7/9, \ybar) node[midway, above, font=\tiny] {Interpolation};
                \draw[|-|, thick, red] (7/9, \ybar) -- (1, \ybar) node[midway, above, font=\tiny] {Extrap.};
            \end{scope}
        \end{tikzpicture}
        \caption{OT-Flow Baseline}
    \end{subfigure}
    \hfill
    \begin{subfigure}[b]{0.31\textwidth}
        \centering
        \begin{tikzpicture}
            \node[anchor=south west, inner sep=0] (img) at (0,0) {
                \includegraphics[width=\linewidth]{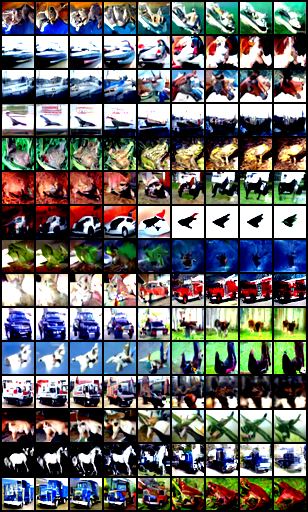}
            };
            \begin{scope}[x={(img.south east)}, y={(img.north west)}]
                \def\ybar{1.06}
                \draw[|-|, thick, red] (0, \ybar) -- (2/9, \ybar) node[midway, above, font=\tiny] {Extrap.};
                \draw[|-|, thick, green!50!black] (2/9, \ybar) -- (7/9, \ybar) node[midway, above, font=\tiny] {Interpolation};
                \draw[|-|, thick, red] (7/9, \ybar) -- (1, \ybar) node[midway, above, font=\tiny] {Extrap.};
            \end{scope}
        \end{tikzpicture}
        \caption{CSFM ($t_1=1.0$, No Truncation)}
    \end{subfigure}
    \hfill
    \begin{subfigure}[b]{0.31\textwidth}
        \centering
        \begin{tikzpicture}
            \node[anchor=south west, inner sep=0] (img) at (0,0) {
                \includegraphics[width=\linewidth]{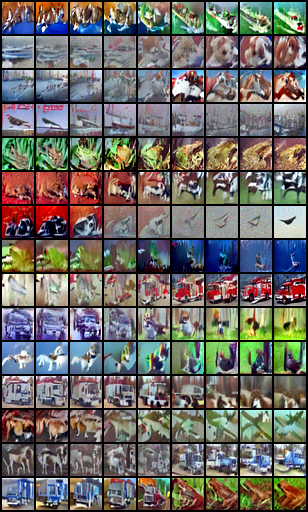}
            };
            \begin{scope}[x={(img.south east)}, y={(img.north west)}]
                \def\ybar{1.06}
                \draw[|-|, thick, red] (0, \ybar) -- (2/9, \ybar) node[midway, above, font=\tiny] {Extrap.};
                \draw[|-|, thick, green!50!black] (2/9, \ybar) -- (7/9, \ybar) node[midway, above, font=\tiny] {Interpolation};
                \draw[|-|, thick, red] (7/9, \ybar) -- (1, \ybar) node[midway, above, font=\tiny] {Extrap.};
            \end{scope}
        \end{tikzpicture}
        \caption{CSFM ($t_1=0.5$, Truncated)}
    \end{subfigure}
    \caption{Spherical linear interpolation ($\lambda \in [0, 1]$, green) and extrapolation ($\lambda \notin [0, 1]$, red) across test-set latent anchors evaluated at 120K steps. Truncated CSFM ($t_1=0.5$) demonstrates fluid semantic morphing across intermediate coordinates and maintains structural integrity into the extrapolation regime.}
    \label{fig:slerp_extrapolation_app}
\end{figure*}

\end{document}